\documentclass[10pt,reqno]{amsart}

\usepackage[a4paper,top=0.86in,bottom=0.92in,left=0.90in,right=0.90in,
            headsep=13pt,footskip=24pt]{geometry}
\usepackage{amsmath,amssymb,amsthm,mathtools,mathrsfs}
\usepackage{booktabs,array,enumitem,float}
\usepackage{tikz,tikz-cd}
\usetikzlibrary{arrows.meta,positioning,calc,fit}
\usepackage{microtype}
\usepackage{bold-extra}
\usepackage{titlesec}
\usepackage{fancyhdr}
\usepackage[hidelinks]{hyperref}

\allowdisplaybreaks[2]
\setlist{topsep=5pt,itemsep=2.5pt,parsep=0pt,leftmargin=*}

\titleformat{\section}
  {\normalfont\large\bfseries\scshape}
  {\thesection.}{0.65em}{}
  [\vspace{0.15em}\titlerule]
\titlespacing*{\section}{0pt}{2.4ex plus .5ex minus .2ex}{1.15ex plus .2ex}

\titleformat{\subsection}
  {\normalfont\normalsize\bfseries\scshape}
  {\thesubsection.}{0.60em}{}
\titlespacing*{\subsection}{0pt}{1.75ex plus .35ex minus .15ex}{0.65ex}

\titleformat{name=\subsection,numberless}
  {\normalfont\normalsize\bfseries\scshape}
  {}{0pt}{}

\newtheoremstyle{denseplain}{8pt}{8pt}{\itshape}{}{}{.}{0.55em}{\textbf{#1 #2}\thmnote{\,\textnormal{(#3)}}}
\newtheoremstyle{densedefn}{8pt}{8pt}{\normalfont}{}{}{.}{0.55em}{\textbf{#1 #2}\thmnote{\,\textnormal{(#3)}}}
\theoremstyle{denseplain}
\newtheorem{theorem}{Theorem}[section]
\newtheorem{proposition}[theorem]{Proposition}
\newtheorem{lemma}[theorem]{Lemma}
\newtheorem{corollary}[theorem]{Corollary}
\theoremstyle{densedefn}
\newtheorem{definition}[theorem]{Definition}
\newtheorem{remark}[theorem]{Remark}
\newtheorem{example}[theorem]{Example}

\newcommand{\R}{\mathbb R}
\newcommand{\one}{\mathbf 1}
\newcommand{\Opt}{\operatorname{Opt}}
\newcommand{\argmaxa}{\operatorname*{arg\,max}}
\newcommand{\tr}{\mathsf T}
\DeclareMathOperator{\GL}{GL}

\renewcommand{\sectionmark}[1]{\markboth{#1}{}}
\fancypagestyle{plain}{%
  \fancyhf{}%
  \fancyfoot[R]{\footnotesize\thepage}%
}
\makeatletter
\let\origsetauthors\@setauthors
\def\@setauthors{%
  \origsetauthors
  \begingroup
    \centering\footnotesize\scshape
    \vspace{0\p@}%
    \affiliationline\par
  \endgroup
  \vspace{1\p@}%
}
\renewcommand{\@setaddresses}{}
\makeatother
\newcommand{\affiliationline}{Dovetail Research \;\textperiodcentered\; University of Warwick}

\begin{document}
\title[From Optimal Actions to World Models]
{From Optimal Actions to World Models\\
Identifiability of Transition Kernels in Discounted MDPs}
\author{Neal Batra}
\thanks{Funded by the Advanced Research + Invention Agency (ARIA) through
project code MSAI-SE01-P005.}
\date{}

\begin{abstract}
We study what can be recovered about the transition probabilities of a
Markov decision process from optimal actions alone. This is closely related
to the inverse problem considered by Letcher et al.~\cite{letcher2026}, who
ask when the dynamics can be recovered from numerical \(Q\)-values. Here the
numerical values themselves are not observed; only the optimal actions are
known, for every reward in a given class.

For state--action rewards \(r(s,a)\), knowing the optimal actions for every
reward also tells us how much better one action is than another when each is
followed by the same fixed policy. This is still not enough to determine the
transition probabilities uniquely. We prove that two kernels give the same
optimal actions for every reward exactly when
\[
Q_{s,a}
=
\Bigl(P_{s,a}+\tfrac1\gamma e_s^{\tr}(L-I)\Bigr)L^{-1}
\]
for one invertible matrix \(L\) satisfying \(L\one=\one\). Near a kernel
with strictly positive entries, there is an \(n(n-1)\)-dimensional family of
different kernels with this property. The result is unchanged if we consider
only rewards having a unique optimal action at every state.

We then compare this with rewards of the forms \(r(s)\) and \(r(s,a,s')\).
Rewards that depend on the next state can usually recover the transition
kernel itself: every row at a state with at least two actions is determined,
and we describe exactly when a row at a state with one action can remain
hidden. State rewards reveal less: two kernels give the same optimal actions
exactly when every deterministic policy is optimal for the same set of
rewards. The results show how the form of the reward affects what can be
learned about the dynamics from optimal actions alone.
\end{abstract}

\maketitle
\pagestyle{fancy}

\section{Introduction}

A transition model tells us both what an agent should do and what is likely
to happen after each action. These are not the same kind of information.
Two models may recommend exactly the same actions while making different
predictions about the next state. This paper asks when that can happen, even
if the optimal actions are known for every reward in a large class.

The question is closely related to recent work of Letcher et
al.~\cite{letcher2026}, who ask when a transition kernel can be recovered
from numerical \(Q\)-values. Here, we retain only optimal actions. Since one optimal action contains far less information than a full vector of \(Q\)-values, it is not clear in advance how much of the transition kernel can still be recovered.

Let \(P\) be the transition kernel of a finite discounted Markov decision
process. For a reward \(r\), write \(\Opt_{P,r}(s)\) for the set of optimal
actions at state \(s\). We classify all kernels \(Q\) satisfying
\begin{equation}
\Opt_{P,r}(s)=\Opt_{Q,r}(s)
\qquad
\text{for every permitted reward }r\text{ and every }s\in S.
\label{eq:inverse-problem}
\end{equation}
Thus \(P\) and \(Q\) are regarded as indistinguishable when every reward in
the chosen class leads to exactly the same optimal actions.

We compare three kinds of reward:
\begin{equation}
r(s),\qquad r(s,a),\qquad r(s,a,s').
\label{eq:reward-families}
\end{equation}
The distinction matters because each form of reward can test the dynamics in
a different way. A state reward can only value the states that are visited.
A state--action reward can also distinguish actions immediately. A
transition-dependent reward can directly reward or penalize a particular
successor state.

Our main result concerns state--action rewards. Suppose that \(P\) and \(Q\)
give the same optimal actions for every reward \(r(s,a)\). We prove something
stronger than equality of the optimal choices: for every deterministic policy
and every reward, taking an action once and then following that policy has
the same advantage under both kernels. Equivalently, the difference between
the returns from any two actions is the same under \(P\) and \(Q\).

This happens exactly when there is one invertible matrix \(L\), the same for
every policy and reward, such that
\begin{equation}
Q_{s,a}
=
\Bigl(P_{s,a}+\tfrac1\gamma e_s^{\tr}(L-I)\Bigr)L^{-1},
\qquad
L\one=\one.
\label{eq:Phi-intro}
\end{equation}
The matrix \(L\) relates the value functions under the two kernels. Formula
\eqref{eq:Phi-intro} gives all kernels that cannot be distinguished using
state--action rewards. Near a kernel with strictly positive entries, these
kernels form an \(n(n-1)\)-dimensional smooth subset of a space of dimension
\begin{equation}
(n-1)\sum_{s\in S}|A(s)|.
\label{eq:intro-ambient-dimension}
\end{equation}
The proof initially uses rewards for which several actions are optimal.
Appendix~\ref{app:supplementary} shows that these rewards are not essential:
the same classification already follows from rewards having a unique optimal
action at every state.

Transition-dependent rewards give substantially more information. Because
the reward may depend on the successor state, it can be chosen to distinguish
two different transition rows directly. We prove that every row at a state
with at least two actions must therefore be identified. A row at a state with
one action can remain hidden, but only in the special case where rewarding
visits to that state never changes any optimal action. As a consequence,
once the MDP contains at least one genuine choice, almost every strictly
positive kernel is identified exactly by transition-dependent rewards.

State rewards give less information. For each deterministic policy, one can
consider the set of state rewards for which that policy is optimal. We prove
that two kernels give the same optimal actions for every state reward exactly
when all of these sets agree. This gives a finite test for equivalence,
although not the explicit description obtained for state--action rewards.

The three cases therefore satisfy
\[
P\equiv_{\mathrm{tr}}Q
\Longrightarrow
P\equiv_{\mathrm{sa}}Q
\Longrightarrow
P\equiv_{\mathrm{s}}Q,
\]
and both implications are strict. Allowing the reward to depend on more of
the transition makes it possible to distinguish more transition models.

The comparison with the existing literature is as follows. Letcher et
al.~\cite{letcher2026} retain numerical \(Q\)-values for a finite collection
of rewards, whereas we retain only the maximizing actions but allow every
reward in the chosen class. The two settings therefore make different
assumptions: their data are numerically richer, while our set of rewards is
larger. Grimm et al.~\cite{grimm2020} compare models through their Bellman
updates on selected functions and policies. Richens et
al.~\cite{richens2025} study when policies for sufficiently rich goals
contain enough information to recover a world model. Earlier notions of MDP
equivalence similarly ask which changes to a model preserve quantities
relevant to decision making~\cite{givan2003}. Work on reward invariance
instead holds the transition model fixed and asks which changes to the reward
preserve behaviour~\cite{ng1999,skalse2023,mustafin2025}. Here the reward is
held fixed while the transition kernel is varied.

\section{Discounted Markov decision processes}
\label{sec:setup}

\subsection{The model}

Let $S=\{1,\ldots,n\}$ be a finite set of labelled states. At state $s$, the available labelled actions form a finite nonempty set $A(s)$. For every state and action,
\[
P_{s,a}:=P(\cdot\mid s,a)
\]
is a row vector of transition probabilities. Thus $P_{s,a}(x)\ge0$ and $P_{s,a}\one=1$, where $\one$ is the all-ones column vector. We identify a function on $S$ with a column vector in $\R^n$, and $e_s$ denotes the $s$th standard basis column vector.

Fix a discount factor $\gamma\in(0,1)$. A stationary deterministic policy $\pi$ chooses one action $\pi(s)\in A(s)$ at each state. The matrix $P^\pi$ has row $P_{s,\pi(s)}$ at state $s$.

A state--action reward is simply a collection of real numbers $r(s,a)$, one for each available pair. Write $r^\pi(s)=r(s,\pi(s))$. The value of policy $\pi$ is
\[
V_{P,r}^{\pi}(s)
:=\mathbb E_{P,\pi}\!\left[
\sum_{t=0}^{\infty}\gamma^t r(S_t,A_t)\,\middle|\,S_0=s
\right].
\]
It satisfies
\begin{equation}
V_{P,r}^{\pi}=r^\pi+\gamma P^\pi V_{P,r}^{\pi},
\qquad
V_{P,r}^{\pi}=(I-\gamma P^\pi)^{-1}r^\pi.
\label{eq:policy-value}
\end{equation}
The inverse exists because $P^\pi$ is stochastic and $\gamma \in (0,1)$.
The optimal value is the unique vector satisfying
\begin{equation}
V_{P,r}^*(s)
=\max_{a\in A(s)}\{r(s,a)+\gamma P_{s,a}V_{P,r}^*\}.
\label{eq:bellman-optimality}
\end{equation}
We write
\begin{equation}
\Opt_{P,r}(s)
:=\argmaxa_{a\in A(s)}\{r(s,a)+\gamma P_{s,a}V_{P,r}^*\}
\label{eq:optimal-actions}
\end{equation}
for the set of optimal actions at state $s$. A deterministic policy is optimal exactly when it chooses an action in this set at every state~\cite{puterman1994}.

For later use, the advantage of action \(a\) relative to a policy
\(\pi\) is
\begin{equation}
A_{P,r}^{\pi}(s,a)
:=r(s,a)+\gamma P_{s,a}V_{P,r}^{\pi}-V_{P,r}^{\pi}(s).
\label{eq:advantage}
\end{equation}
It is the gain from taking \(a\) once and then continuing with \(\pi\),
relative to following \(\pi\) immediately.

\subsection{Equivalence under different reward models}

We use three direct notions, one for each reward model.

\begin{definition}
Let \(P\) and \(Q\) be transition kernels on the same labelled state space
\(S\), with the same labelled action sets \(A(s)\) and the same discount
factor \(\gamma\).

\begin{enumerate}[label=\textup{(\roman*)}]
    \item \(P\) and \(Q\) are \emph{state--action equivalent}, written
    \[
    P\equiv_{\mathrm{sa}}Q,
    \]
    if
    \[
    \Opt_{P,r}(s)=\Opt_{Q,r}(s)
    \]
    for every state--action reward \(r(s,a)\) and every state \(s\in S\).

    \item \(P\) and \(Q\) are \emph{state-reward equivalent}, written
    \[
    P\equiv_{\mathrm{s}}Q,
    \]
    if
    \[
    \Opt_{P,r}(s)=\Opt_{Q,r}(s)
    \]
    for every state reward \(r(s)\) and every state \(s\in S\).

    \item \(P\) and \(Q\) are \emph{transition-reward equivalent}, written
    \[
    P\equiv_{\mathrm{tr}}Q,
    \]
    if
    \[
    \Opt_{P,r}(s)=\Opt_{Q,r}(s)
    \]
    for every transition-dependent reward \(r(s,a,s')\) and every state
    \(s\in S\).
\end{enumerate}
\end{definition}

The state and action labels are fixed throughout. Thus the equalities above
are equalities of the actual subsets of \(A(s)\), not equalities up to a
relabeling of states or actions.

\section{State--action rewards}
\label{sec:state-action}

We begin with rewards of the form \(r(s,a)\). Suppose we know, for every
such reward, which actions are optimal under a transition kernel. The main
result of this section asks exactly how much of the kernel is determined by
this information.

The upcoming necessity argument uses the rewards
\[
r_v^P(s,a)=v(s)-\gamma P_{s,a}v,
\]
which prescribe an arbitrary optimal value \(v\) while making every action
optimal. Comparing the resulting policy-evaluation equations under \(P\)
and \(Q\) forces one matrix \(L\) to work for every deterministic policy.
Appendix~\ref{app:supplementary} shows that the same classification is
already determined by rewards having a unique optimal action at every state.

For any $v\in\R^n$, define
\begin{equation}
r_v^P(s,a):=v(s)-\gamma P_{s,a}v.
\label{eq:value-prescribing-reward}
\end{equation}
This reward cancels the expected continuation term. Hence
\[
r_v^P(s,a)+\gamma P_{s,a}v=v(s)
\]
for every action at every state.

\begin{lemma}
\label{lem:value-prescribing-reward}
For the reward $r_v^P$, the optimal value under $P$ is $v$, and every action is optimal at every state. Conversely, if a reward makes every action optimal under $P$ and has optimal value $v$, then it must equal $r_v^P$.
\end{lemma}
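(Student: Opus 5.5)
The plan is to use uniqueness of the solution of the Bellman optimality equation \eqref{eq:bellman-optimality}; this uniqueness was stated in Section~\ref{sec:setup} and follows from the contraction property of the Bellman operator for \(\gamma\in(0,1)\). Both directions then reduce to substituting into \eqref{eq:bellman-optimality} and \eqref{eq:optimal-actions}.

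\emph{Forward direction.} Fix \(v\in\R^n\) and plug \(v\) into the right-hand side of \eqref{eq:bellman-optimality} with reward \(r_v^P\). By the defining identity \(r_v^P(s,a)+\gamma P_{s,a}v=v(s)\), every term inside the maximum equals \(v(s)\), so the maximum is \(v(s)\). Hence \(v\) solves \eqref{eq:bellman-optimality}, and by uniqueness \(V^*_{P,r_v^P}=v\). Substituting \(V^*=v\) into \eqref{eq:optimal-actions}, the objective \(r_v^P(s,a)+\gamma P_{s,a}v\) is the constant \(v(s)\) across \(a\in A(s)\). So the argmax is all of \(A(s)\): every action is optimal at every state.

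\emph{Converse.} Suppose \(r\) is a state--action reward with \(V^*_{P,r}=v\) and \(\Opt_{P,r}(s)=A(s)\) for every \(s\). Each \(a\in A(s)\) then attains the maximum in \eqref{eq:bellman-optimality}, which equals \(V^*_{P,r}(s)=v(s)\). Therefore \(r(s,a)+\gamma P_{s,a}v=v(s)\) for every available pair, and rearranging gives \(r(s,a)=v(s)-\gamma P_{s,a}v=r_v^P(s,a)\).

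There is no real obstacle here. The only point needing care is the appeal to uniqueness of the fixed point of \eqref{eq:bellman-optimality}, which is what lets us identify \(v\) as the optimal value rather than merely a solution of the equation. I would cite this uniqueness as already asserted in the setup, or \cite{puterman1994}.
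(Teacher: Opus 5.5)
Your proposal is correct and follows essentially the same route as the paper: verify that $v$ solves the Bellman optimality equation with every action attaining the maximum, invoke uniqueness of the fixed point, and for the converse rearrange the equality attained by every action. The only difference is that the paper proves the $\gamma$-contraction of the optimality operator inline, whereas you cite the uniqueness asserted in the setup or in Puterman; either choice is fine.
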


\begin{proof}
Let
\[
(Tu)(s):=\max_{a\in A(s)}\{r_v^P(s,a)+\gamma P_{s,a}u\}
\]
be the optimality operator for this reward. Substituting $u=v$ gives
\[
(Tv)(s)=\max_{a\in A(s)}v(s)=v(s),
\]
and every action attains the maximum.

It remains to know that this fixed point is the optimal value. For any $u,z\in\R^n$,
\begin{align*}
|(Tu)(s)-(Tz)(s)|
&\le \max_{a\in A(s)}\gamma|P_{s,a}(u-z)|\\
&\le \gamma\|u-z\|_\infty.
\end{align*}
Thus $T$ is a contraction with modulus $\gamma<1$ and has a unique fixed point. Therefore $V_{P,r_v^P}^*=v$.

Conversely, suppose every action is optimal and the optimal value is $v$. Then every action attains equality in \eqref{eq:bellman-optimality}, so
\[
v(s)=r(s,a)+\gamma P_{s,a}v
\]
for every $s$ and $a\in A(s)$. Rearranging gives \eqref{eq:value-prescribing-reward}.
\end{proof}

The next lemma proves that one matrix $L$ relates the values under $P$ and $Q$, and that the same matrix works for every deterministic policy.

\begin{lemma}
\label{lem:forced-coordinate-map}
Assume $P\equiv_{\mathrm{sa}}Q$. Then there is a unique invertible matrix $L$ such that $L\one=\one$ and
\begin{equation}
(I-\gamma Q^\pi)L=I-\gamma P^\pi
\label{eq:all-policy-resolvent}
\end{equation}
for every deterministic policy $\pi$. Equivalently,
\begin{equation}
Q_{s,a}L=P_{s,a}+\tfrac1\gamma e_s^{\tr}(L-I)
\label{eq:forced-row-relation}
\end{equation}
for every state $s$ and action $a\in A(s)$.
\end{lemma}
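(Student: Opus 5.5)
The plan is to feed the value-prescribing rewards $r_v^P$ of Lemma~\ref{lem:value-prescribing-reward} into the equivalence $P\equiv_{\mathrm{sa}}Q$. Then I will read off a map $v\mapsto w$ between optimal values under $P$ and under $Q$, and show that this map is a single invertible matrix $L$ that does not depend on the policy.

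\emph{Step 1.} Fix $v\in\R^n$ and the reward $r=r_v^P$. By Lemma~\ref{lem:value-prescribing-reward}, every action is optimal under $P$. Since $P\equiv_{\mathrm{sa}}Q$, every action is also optimal under $Q$. Let $w:=V^*_{Q,r}$, which is uniquely determined by $v$. The converse part of Lemma~\ref{lem:value-prescribing-reward}, applied to $Q$, gives $r=r_w^Q$. Hence
\[
v(s)-\gamma P_{s,a}v=w(s)-\gamma Q_{s,a}w
\qquad\text{for all } s \text{ and } a\in A(s).
\]

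\emph{Step 2.} Fix any deterministic policy $\pi$. Restricting the identity of Step~1 to the pairs $(s,\pi(s))$ gives $(I-\gamma P^\pi)v=(I-\gamma Q^\pi)w$. Therefore
\[
w=(I-\gamma Q^\pi)^{-1}(I-\gamma P^\pi)v .
\]
Set $L_\pi:=(I-\gamma Q^\pi)^{-1}(I-\gamma P^\pi)$. This is invertible, being a product of invertible matrices (both inverses exist as in \eqref{eq:policy-value}).

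The key observation, and the only point needing care, is that $w$ was defined as the $Q$-optimal value for $r_v^P$ without reference to $\pi$. So $L_\pi v=L_{\pi'}v$ for every $v$ and every pair of policies $\pi,\pi'$. Hence $L:=L_\pi$ is independent of $\pi$, which is exactly \eqref{eq:all-policy-resolvent}.

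Uniqueness of $L$ is immediate, since \eqref{eq:all-policy-resolvent} for any single $\pi$ already determines $L=(I-\gamma Q^\pi)^{-1}(I-\gamma P^\pi)$.

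For the normalization, $P^\pi\one=\one$ gives $(I-\gamma P^\pi)\one=(1-\gamma)\one$. Likewise $(I-\gamma Q^\pi)\one=(1-\gamma)\one$, so $(I-\gamma Q^\pi)^{-1}\one=(1-\gamma)^{-1}\one$. Combining these yields $L\one=\one$.

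\emph{Step 3: equivalence with \eqref{eq:forced-row-relation}.} Row $s$ of \eqref{eq:all-policy-resolvent} reads
\[
e_s^{\tr}L-\gamma Q_{s,\pi(s)}L=e_s^{\tr}-\gamma P_{s,\pi(s)},
\]
which rearranges to \eqref{eq:forced-row-relation} with $a=\pi(s)$. Every pair $(s,a)$ with $a\in A(s)$ is $(s,\pi(s))$ for some deterministic policy $\pi$. So \eqref{eq:all-policy-resolvent} for all $\pi$ implies \eqref{eq:forced-row-relation} for all $(s,a)$. Conversely, stacking the rows of \eqref{eq:forced-row-relation} along any $\pi$ recovers \eqref{eq:all-policy-resolvent}.

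I expect no real obstacle here. The only subtle point is the $\pi$-independence in Step~2. It rests entirely on the uniqueness of the optimal value under $Q$ and on the fact that the same reward $r_v^P$ makes \emph{all} actions optimal, so that every policy's evaluation equation holds simultaneously.
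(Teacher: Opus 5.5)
Your proof is correct and follows the paper's argument essentially step for step. You feed in $r_v^P$, use that every action stays optimal under $Q$ to get $(I-\gamma Q^\pi)w=(I-\gamma P^\pi)v$ for all $\pi$, and conclude from the $\pi$-independence of $w=V^*_{Q,r_v^P}$ that a single $L$ works (the only cosmetic difference is that you obtain the per-action identity from the converse half of Lemma~\ref{lem:value-prescribing-reward} applied to $Q$, whereas the paper reads it off the policy-evaluation equations of the deterministic policies, all of which are optimal under $Q$).
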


\begin{proof}
Fix $v\in\R^n$. By Lemma~\ref{lem:value-prescribing-reward} every action is optimal under $P$ for the reward $r_v^P$, so by $P\equiv_{\mathrm{sa}}Q$ every action is optimal under $Q$ as well. Write $w(v):=V_{Q,r_v^P}^*$. Every deterministic policy therefore selects an optimal action at every state under $Q$, hence is optimal, so for every such $\pi$,
\[
w(v)=r_v^{P,\pi}+\gamma Q^\pi w(v),
\qquad
r_v^{P,\pi}(s):=r_v^P(s,\pi(s)).
\]
By \eqref{eq:value-prescribing-reward}, $r_v^{P,\pi}=v-\gamma P^\pi v=(I-\gamma P^\pi)v$, so
\begin{equation}
(I-\gamma Q^\pi)w(v)=(I-\gamma P^\pi)v
\label{eq:all-policy-evaluation}
\end{equation}
for every deterministic policy $\pi$ and every $v\in\R^n$.

Fix one policy $\pi_0$ and set $L:=(I-\gamma Q^{\pi_0})^{-1}(I-\gamma P^{\pi_0})$, which is invertible as a product of invertible matrices. Taking $\pi=\pi_0$ in \eqref{eq:all-policy-evaluation} gives $w(v)=Lv$, and substituting this back into \eqref{eq:all-policy-evaluation} for arbitrary $\pi$ gives
\[
(I-\gamma Q^\pi)Lv=(I-\gamma P^\pi)v
\qquad(v\in\R^n),
\]
which is \eqref{eq:all-policy-resolvent}.

Since $P^\pi\one=Q^\pi\one=\one$, applying \eqref{eq:all-policy-resolvent} to $\one$ gives
\[
(I-\gamma Q^\pi)L\one
=(I-\gamma P^\pi)\one
=(1-\gamma)\one
=(I-\gamma Q^\pi)\one,
\]
so $L\one=\one$ by invertibility of $I-\gamma Q^\pi$.

For \eqref{eq:forced-row-relation}, fix $s\in S$ and $a\in A(s)$ and choose $\pi$ with $\pi(s)=a$. The $s$th row of \eqref{eq:all-policy-resolvent} reads
\[
e_s^{\tr}L-\gamma Q_{s,a}L=e_s^{\tr}-\gamma P_{s,a},
\]
which rearranges to $Q_{s,a}L=P_{s,a}+\tfrac1\gamma e_s^{\tr}(L-I)$.

For uniqueness, suppose $M$ also satisfies $(I-\gamma Q^\pi)M=I-\gamma P^\pi$ for every deterministic policy $\pi$. Fixing any one such policy gives $(I-\gamma Q^\pi)(L-M)=0$, so $L=M$. In particular, $L$ does not depend on the choice of $\pi_0$.
\end{proof}

Let \(\mathcal G\) be the subgroup
\[
\mathcal G:=\{L\in\GL_n(\R):L\one=\one\}.
\]
For $L\in\mathcal G$, define
\begin{equation}
\Phi_L(P)_{s,a}
:=\Bigl(P_{s,a}+\tfrac1\gamma e_s^{\tr}(L-I)\Bigr)L^{-1}.
\label{eq:Phi}
\end{equation}
Every row in \eqref{eq:Phi} sums to one because $L^{-1}\one=\one$. It is a transition kernel when all entries are nonnegative.

The transformation preserves the numbers used to compare actions, after values are changed by $L$.

\begin{lemma}
\label{lem:comparison-identity}
Let $Q=\Phi_L(P)$ for some $L\in\mathcal G$, and suppose $Q$ is a transition kernel. Then, for every state--action reward $r$, every $v\in\R^n$, and every $(s,a)$,
\begin{equation}
r(s,a)+\gamma Q_{s,a}Lv-(Lv)(s)
=r(s,a)+\gamma P_{s,a}v-v(s).
\label{eq:comparison-identity}
\end{equation}
Moreover,
\begin{equation}
(I-\gamma Q^\pi)L=I-\gamma P^\pi
\label{eq:policy-matrix-identity}
\end{equation}
for every deterministic policy $\pi$.
\end{lemma}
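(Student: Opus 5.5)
The plan is to verify the row identity first and read off everything else from it. By the definition \eqref{eq:Phi}, multiplying $Q_{s,a}=\Phi_L(P)_{s,a}$ on the right by $L$ gives
\[
Q_{s,a}L=P_{s,a}+\tfrac1\gamma e_s^{\tr}(L-I)
\]
for every state $s$ and action $a\in A(s)$. This is exactly the relation \eqref{eq:forced-row-relation}, now obtained directly from the definition rather than forced by equivalence. The hypothesis that $Q$ is a transition kernel is not needed for the algebra. It only ensures that the left-hand side of \eqref{eq:comparison-identity} is a legitimate comparison quantity for the kernel $Q$.

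For \eqref{eq:comparison-identity}, apply the row relation to an arbitrary $v\in\R^n$ and multiply by $\gamma$:
\[
\gamma Q_{s,a}Lv=\gamma P_{s,a}v+e_s^{\tr}Lv-e_s^{\tr}v=\gamma P_{s,a}v+(Lv)(s)-v(s).
\]
Subtract $(Lv)(s)$ from both sides and add $r(s,a)$. The terms $(Lv)(s)$ cancel, which yields \eqref{eq:comparison-identity}. This holds for every reward $r$, every $v$ and every pair $(s,a)$, since $r(s,a)$ plays no role beyond being added to both sides.

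For \eqref{eq:policy-matrix-identity}, fix a deterministic policy $\pi$ and compare the two sides row by row. The $s$th row of $(I-\gamma Q^\pi)L$ is $e_s^{\tr}L-\gamma Q_{s,\pi(s)}L$. Substituting the row relation with $a=\pi(s)$ gives
\[
e_s^{\tr}L-\gamma P_{s,\pi(s)}-e_s^{\tr}L+e_s^{\tr}=e_s^{\tr}-\gamma P_{s,\pi(s)},
\]
which is the $s$th row of $I-\gamma P^\pi$. Since this holds for every $s$, the matrix identity follows.

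There is no real obstacle here. The only point needing care is the sign bookkeeping when the factor $\tfrac1\gamma$ is multiplied through. The condition $L\one=\one$ is used only implicitly, to guarantee that $Q$ has unit row sums; it plays no part in either identity.
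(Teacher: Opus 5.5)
Your proof is correct and matches the paper's argument: multiply the definition of $\Phi_L(P)$ on the right by $L$ to get $\gamma Q_{s,a}L=\gamma P_{s,a}+e_s^{\tr}(L-I)$, apply this to $v$ so that the $(Lv)(s)$ terms cancel, and obtain the policy identity row by row with $a=\pi(s)$. Your side remarks are also accurate: neither $L\one=\one$ nor the kernel hypothesis is needed for the algebra, only the invertibility of $L$.
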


\begin{proof}
Multiplying \eqref{eq:Phi} by $L$ gives
\[
\gamma Q_{s,a}L=\gamma P_{s,a}+e_s^{\tr}(L-I).
\]
Therefore
\begin{align*}
r(s,a)+\gamma Q_{s,a}Lv-(Lv)(s)
&=r(s,a)+\gamma P_{s,a}v+e_s^{\tr}(L-I)v-e_s^{\tr}Lv\\
&=r(s,a)+\gamma P_{s,a}v-v(s),
\end{align*}
which proves \eqref{eq:comparison-identity}. Selecting the row corresponding to $\pi(s)$ at each state gives \eqref{eq:policy-matrix-identity}.
\end{proof}

\begin{theorem}
\label{thm:sa-main}
Let \(P\) and \(Q\) be transition kernels on the same labelled states and
actions, with the same discount factor. The following are equivalent.
\begin{enumerate}[label=\textup{(\roman*)}]
\item \(P\equiv_{\mathrm{sa}}Q\).
\item For every deterministic policy \(\pi\), every state--action reward
\(r\), every state \(s\), and every action \(a\in A(s)\),
\[
A_{Q,r}^{\pi}(s,a)=A_{P,r}^{\pi}(s,a).
\]
\item There exists \(L\in\mathcal G\) such that \(Q=\Phi_L(P)\).
\end{enumerate}
The matrix \(L\) is unique. When these conditions hold,
\begin{align}
(I-\gamma Q^\pi)L&=I-\gamma P^\pi,
\label{eq:policy-matrix-identity-main}\\
V_{Q,r}^{\pi}&=LV_{P,r}^{\pi},
\label{eq:policy-value-transform}\\
V_{Q,r}^*&=LV_{P,r}^*.
\label{eq:optimal-value-transform}
\end{align}
\end{theorem}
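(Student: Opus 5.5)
I will prove the cycle (i)\(\Rightarrow\)(iii)\(\Rightarrow\)(ii)\(\Rightarrow\)(i), and then read off uniqueness and the three displayed identities.

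\textbf{(i)\(\Rightarrow\)(iii).} This is Lemma~\ref{lem:forced-coordinate-map}. It produces an invertible \(L\) with \(L\one=\one\), so \(L\in\mathcal G\), and it gives the row relation \eqref{eq:forced-row-relation}, namely \(Q_{s,a}L=P_{s,a}+\tfrac1\gamma e_s^{\tr}(L-I)\). Multiplying this relation on the right by \(L^{-1}\) gives \(Q=\Phi_L(P)\).

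\textbf{(iii)\(\Rightarrow\)(ii).} Assume \(Q=\Phi_L(P)\). Lemma~\ref{lem:comparison-identity} gives \eqref{eq:policy-matrix-identity}, which is the same as \eqref{eq:policy-matrix-identity-main}. Inverting it yields
\[
L(I-\gamma P^\pi)^{-1}=(I-\gamma Q^\pi)^{-1}.
\]
Applying both sides to \(r^\pi\) and using \eqref{eq:policy-value} gives \(V^\pi_{Q,r}=LV^\pi_{P,r}\), which is \eqref{eq:policy-value-transform}. Now substitute \(v=V^\pi_{P,r}\) into the comparison identity \eqref{eq:comparison-identity}. Its left side becomes the advantage \(A^\pi_{Q,r}(s,a)\) and its right side becomes \(A^\pi_{P,r}(s,a)\), so (ii) holds.

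\textbf{(ii)\(\Rightarrow\)(i).} This step needs a little care, because equality of advantages has to be turned into equality of sets of optimal actions. Fix \(r\) and let \(\pi^*\) be a deterministic policy that is optimal under \(P\). Then \(A^{\pi^*}_{P,r}(s,a)\le 0\) for all \(s,a\), by the Bellman equation \eqref{eq:bellman-optimality}. By (ii) the same inequality holds under \(Q\). A policy with no positive advantage is optimal, by the standard policy-improvement characterization~\cite{puterman1994}. Hence \(\pi^*\) is optimal under \(Q\) as well, and \(V^*_{Q,r}=V^{\pi^*}_{Q,r}\).

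It follows that for each kernel \(K\in\{P,Q\}\),
\[
\Opt_{K,r}(s)=\{a : A^{\pi^*}_{K,r}(s,a)=0\}.
\]
Indeed, \(r(s,a)+\gamma K_{s,a}V^*_{K,r}\) differs from \(A^{\pi^*}_{K,r}(s,a)\) only by the constant \(V^*_{K,r}(s)\), which equals the maximum. Since (ii) says the two advantage functions coincide, the two zero sets coincide, and so \(\Opt_{P,r}(s)=\Opt_{Q,r}(s)\). This is (i).

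\textbf{Uniqueness and the displayed identities.} Suppose \(\Phi_L(P)=\Phi_M(P)=Q\). Both \(L\) and \(M\) then satisfy \eqref{eq:policy-matrix-identity} for any fixed \(\pi\), so
\[
(I-\gamma Q^\pi)(L-M)=0.
\]
Since \(I-\gamma Q^\pi\) is invertible, \(L=M\). Identities \eqref{eq:policy-matrix-identity-main} and \eqref{eq:policy-value-transform} were obtained in the step (iii)\(\Rightarrow\)(ii). For \eqref{eq:optimal-value-transform}, take \(\pi=\pi^*\), which is optimal for both kernels by the previous step, and apply \eqref{eq:policy-value-transform}.
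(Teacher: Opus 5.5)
Your proof is correct and follows the paper's route: the same cycle (i)$\Rightarrow$(iii)$\Rightarrow$(ii)$\Rightarrow$(i) via Lemmas~\ref{lem:forced-coordinate-map} and~\ref{lem:comparison-identity}, with the same uniqueness argument. The differences are minor. In (ii)$\Rightarrow$(i) you fix one $P$-optimal policy and identify both optimal-action sets with the common zero set of its advantage, which avoids the paper's construction of a policy through each $a\in\Opt_{P,r}(s)$ followed by swapping $P$ and $Q$; and you obtain $V^*_{Q,r}=LV^*_{P,r}$ from that common optimal policy rather than by checking the optimality equation via \eqref{eq:comparison-identity}.
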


\begin{proof}
Assume first that \textup{(i)} holds. Lemma~\ref{lem:forced-coordinate-map}
gives a unique \(L\in\mathcal G\) satisfying
\[
(I-\gamma Q^\pi)L=I-\gamma P^\pi
\]
for every deterministic policy \(\pi\). Reading this identity row by row
gives \(Q=\Phi_L(P)\), so \textup{(iii)} holds.

Now assume \textup{(iii)}. Lemma~\ref{lem:comparison-identity} gives
\eqref{eq:policy-matrix-identity-main}. For a fixed policy \(\pi\) and
reward \(r\),
\[
(I-\gamma Q^\pi)LV_{P,r}^{\pi}
=(I-\gamma P^\pi)V_{P,r}^{\pi}
=r^\pi.
\]
The solution of \((I-\gamma Q^\pi)u=r^\pi\) is unique, so
\eqref{eq:policy-value-transform} holds. Applying
\eqref{eq:comparison-identity} with \(v=V_{P,r}^{\pi}\) gives
\[
A_{Q,r}^{\pi}(s,a)=A_{P,r}^{\pi}(s,a),
\]
which proves \textup{(ii)}.

It remains to prove that \textup{(ii)} implies \textup{(i)}. Fix a reward
\(r\), a state \(s\), and an action \(a\in\Opt_{P,r}(s)\). Choose a
deterministic policy \(\pi\) with \(\pi(s)=a\) and
\(\pi(t)\in\Opt_{P,r}(t)\) at every other state \(t\). This policy is
optimal under \(P\), so
\[
A_{P,r}^{\pi}(t,c)\le0
\qquad(t\in S,\ c\in A(t)).
\]
By \textup{(ii)}, the same inequalities hold under \(Q\), so \(\pi\) is
optimal under \(Q\). In particular, \(a=\pi(s)\) is optimal at \(s\) under
\(Q\). Hence \(\Opt_{P,r}(s)\subseteq\Opt_{Q,r}(s)\). Interchanging \(P\)
and \(Q\) gives the reverse inclusion, proving \textup{(i)}.

Finally, let \(v=V_{P,r}^*\) and \(w=Lv\). By
\eqref{eq:comparison-identity},
\[
r(s,a)+\gamma Q_{s,a}w-w(s)
=
r(s,a)+\gamma P_{s,a}v-v(s).
\]
The right-hand side is nonpositive for every action and is zero for at least
one action at each state. Thus \(w\) satisfies the optimality equation under
\(Q\), proving \eqref{eq:optimal-value-transform}.
\end{proof}

\begin{corollary}
\label{cor:orbit}
The kernels that are state--action equivalent to $P$ are exactly
\begin{equation}
[P]_{\mathrm{sa}}
=
\{\Phi_L(P):L\in\mathcal G,\ \Phi_L(P)\text{ is a transition kernel}\}.
\label{eq:orbit-class}
\end{equation}
The matrix $L$ is unique. Given $P$ and $Q$, it may be recovered from any deterministic policy $\pi$ by
\begin{equation}
L=(I-\gamma Q^\pi)^{-1}(I-\gamma P^\pi).
\label{eq:L-recovery}
\end{equation}
Thus membership in the equivalence class can be checked by computing this $L$, verifying $L\one=\one$, and checking \eqref{eq:forced-row-relation} for every state and action.
\end{corollary}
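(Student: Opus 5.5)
The corollary is essentially a repackaging of Theorem~\ref{thm:sa-main} together with the uniqueness part of Lemma~\ref{lem:forced-coordinate-map}, so the plan is to prove the set equality by two inclusions and then read off the recovery formula.

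For the inclusion ``$\subseteq$'', take $Q$ with $P\equiv_{\mathrm{sa}}Q$. By the implication (i)$\Rightarrow$(iii) of Theorem~\ref{thm:sa-main} there is $L\in\mathcal G$ with $Q=\Phi_L(P)$. Since $Q$ is a transition kernel by assumption, $Q$ lies in the right-hand side of \eqref{eq:orbit-class}.

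For ``$\supseteq$'', take $L\in\mathcal G$ such that $\Phi_L(P)$ is a transition kernel, and set $Q=\Phi_L(P)$. Theorem~\ref{thm:sa-main} applies to this pair of kernels, and (iii)$\Rightarrow$(i) gives $P\equiv_{\mathrm{sa}}Q$. The hypothesis that $\Phi_L(P)$ is a kernel is used exactly here: the theorem is stated only for transition kernels.

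Next comes uniqueness together with \eqref{eq:L-recovery}. Suppose $Q=\Phi_L(P)$ with $L\in\mathcal G$. Then \eqref{eq:policy-matrix-identity} from Lemma~\ref{lem:comparison-identity} gives
\[
(I-\gamma Q^\pi)L=I-\gamma P^\pi
\]
for every deterministic $\pi$. Because $Q^\pi$ is stochastic and $\gamma<1$, the matrix $I-\gamma Q^\pi$ is invertible, which yields
\[
L=(I-\gamma Q^\pi)^{-1}(I-\gamma P^\pi).
\]
This shows that $L$ is determined by $P$ and $Q$, so any two matrices $L,M$ with $\Phi_L(P)=\Phi_M(P)=Q$ coincide. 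It also shows that the right-hand side does not depend on the choice of $\pi$.

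Finally, the membership test. Given $P$ and $Q$, compute $L$ from one policy via \eqref{eq:L-recovery}. If $Q\in[P]_{\mathrm{sa}}$, the $L$ obtained this way must be the unique one found above, so it satisfies $L\one=\one$ and \eqref{eq:forced-row-relation}. Conversely, suppose the computed $L$ is invertible, satisfies $L\one=\one$, and satisfies \eqref{eq:forced-row-relation} for every $(s,a)$. Multiplying \eqref{eq:forced-row-relation} on the right by $L^{-1}$ gives $Q=\Phi_L(P)$, so $Q$ is in the class by ``$\supseteq$''.

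Invertibility of this $L$ is automatic, since it is a product of two invertible matrices. There is no genuine obstacle. The only point needing care is that the converse of the test uses \eqref{eq:forced-row-relation} for \emph{every} action, not just those selected by the policy used to compute $L$.
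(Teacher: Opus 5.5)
Your proof is correct and follows the paper's argument. You get the set identity from both directions of Theorem~\ref{thm:sa-main}, and you get the recovery formula and uniqueness by rearranging $(I-\gamma Q^\pi)L=I-\gamma P^\pi$ using invertibility of $I-\gamma Q^\pi$, which is the same argument as the uniqueness part of Lemma~\ref{lem:forced-coordinate-map}. You also check the membership test explicitly, noting that \eqref{eq:forced-row-relation} holds automatically for the actions chosen by $\pi$; the paper leaves this step implicit.
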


\begin{proof}
The set identity is Theorem~\ref{thm:sa-main}. Rearranging \eqref{eq:policy-matrix-identity-main} gives \eqref{eq:L-recovery}. Uniqueness was proved in Lemma~\ref{lem:forced-coordinate-map}.
\end{proof}

As $L$ varies, the set in \eqref{eq:orbit-class} is the orbit of $P$ under the transformations $\Phi_L$; the word ``orbit'' refers only to this set of transformed kernels.

The theorem also gives two useful identities not involving a reward. If $a,b\in A(s)$, then
\begin{equation}
(Q_{s,a}-Q_{s,b})L=P_{s,a}-P_{s,b}.
\label{eq:row-difference-transform}
\end{equation}
For every deterministic policy $\pi$,
\begin{equation}
\frac{\det(I-\gamma P^\pi)}{\det(I-\gamma Q^\pi)}=\det L.
\label{eq:determinant-invariant}
\end{equation}
In particular, the determinant ratio is independent of the policy.
\subsection{Local geometry of the equivalence class}

Each transition row \(P_{s,a}\) has \(n\) entries subject to the single constraint \[ P_{s,a}\one=1. \] Thus each row has \(n-1\) free parameters. Since there is one transition row for every available state--action pair, the ambient space of transition kernels with the fixed states and actions has dimension
\begin{equation}
D=(n-1)\sum_{s\in S}|A(s)|.
\label{eq:ambient-dimension}
\end{equation}
The next result describes the local structure of
\([P]_{\mathrm{sa}}\): its dimension and the directions in which \(P\)
may be varied while remaining in the same equivalence class.

\begin{proposition}
\label{prop:orbit-geometry}
Suppose every transition probability \(P(x\mid s,a)\) is strictly positive.
Then, in a neighbourhood of \(P\), the equivalence class
\([P]_{\mathrm{sa}}\) is a smooth embedded manifold of dimension
\[
n(n-1).
\]
Its tangent space at \(P\) is
\begin{equation}
T_P[P]_{\mathrm{sa}}
=
\left\{
\dot P:
\dot P_{s,a}
=
\frac{1}{\gamma}e_s^{\tr}H-P_{s,a}H,
\quad
H\one=0
\right\}.
\label{eq:orbit-tangent-space}
\end{equation}
\end{proposition}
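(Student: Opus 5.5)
By Corollary~\ref{cor:orbit}, near \(P\) the class \([P]_{\mathrm{sa}}\) is the image of the map \(F:L\mapsto\Phi_L(P)\), defined on a neighbourhood of \(I\) in the affine subspace \(\mathcal G\). Positivity of \(P\) gives two things. First, for \(L\) close to \(I\), every row of \(\Phi_L(P)\) is strictly positive, so the side condition ``\(\Phi_L(P)\) is a transition kernel'' holds automatically. Second, kernels \(Q\) close to \(P\) correspond only to matrices \(L\) close to \(I\). Indeed, by \eqref{eq:L-recovery}, \(L=(I-\gamma Q^{\pi})^{-1}(I-\gamma P^{\pi})\) for any fixed policy \(\pi\). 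This expression depends continuously on \(Q\) and equals \(I\) at \(Q=P\).

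So the plan is to show that \(F\) is a smooth injective immersion whose inverse, defined on its image, is continuous. That makes \(F\) a homeomorphism onto an embedded submanifold of dimension \(\dim\mathcal G=n^2-n=n(n-1)\). The set \(\mathcal G\) is an open subset of the affine space \(\{L:L\one=\one\}\), whose tangent space is \(\{H:H\one=0\}\), of dimension \(n(n-1)\).

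\textbf{Smoothness and tangent space.} The map \(F\) is a rational function of the entries of \(L\), hence smooth on \(\GL_n\). I would differentiate at \(L=I\) in a direction \(H\) with \(H\one=0\), using \(\frac{d}{dt}(I+tH)^{-1}\big|_{t=0}=-H\) and noting that the term \(\tfrac1\gamma e_s^{\tr}(L-I)\) vanishes at \(L=I\). This gives
\[
dF_I(H)_{s,a}=\tfrac1\gamma e_s^{\tr}H-P_{s,a}H,
\]
which is exactly \eqref{eq:orbit-tangent-space}. The remaining requirements are injectivity of \(dF_I\) and the injectivity and continuous inverse of \(F\) itself.

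\textbf{Injectivity of \(dF_I\).} This is the main step. Suppose \(\tfrac1\gamma e_s^{\tr}H=P_{s,a}H\) for all \((s,a)\). Pick any deterministic policy \(\pi\) and stack the rows. This gives \(H=\gamma P^{\pi}H\), that is, \((I-\gamma P^{\pi})H=0\), so \(H=0\) by invertibility. This argument does not even use positivity. So \(F\) is an immersion at \(I\), and therefore on a neighbourhood of \(I\) in \(\mathcal G\).

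\textbf{Injectivity and continuous inverse of \(F\).} Global injectivity follows from the uniqueness of \(L\) in Theorem~\ref{thm:sa-main}. The inverse is the restriction of the continuous map \(Q\mapsto(I-\gamma Q^{\pi})^{-1}(I-\gamma P^{\pi})\), so \(F\) is a homeomorphism onto its image. Combined with the localisation above, this shows that \([P]_{\mathrm{sa}}\cap U=F(W)\) for a suitable neighbourhood \(W\) of \(I\) and open neighbourhood \(U\) of \(P\). The point I expect to need care is choosing \(U\) small enough that \(F^{-1}(U)\subseteq W\) while \(W\) stays inside the region where \(\Phi_L(P)\) is strictly positive and \(F\) is immersive. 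The continuity of the recovery formula gives exactly this. An injective immersion that is a homeomorphism onto its image is an embedding, which yields an embedded \(n(n-1)\)-dimensional manifold with the stated tangent space.
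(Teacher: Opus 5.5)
Your proposal is correct and follows essentially the same route as the paper: you parametrize the class near \(P\) by \(L\mapsto\Phi_L(P)\) on \(\mathcal G\), use positivity to stay among kernels, use the recovery formula \((I-\gamma Q^\pi)^{-1}(I-\gamma P^\pi)\) for injectivity and a continuous inverse, and differentiate at \(L=I\) to obtain the tangent space. Your explicit check that \(dF_I\) is injective via \((I-\gamma P^\pi)H=0\), and your choice of \(U\) so that \(F^{-1}(U)\) stays near \(I\), make precise two steps the paper leaves implicit; in the paper, immersivity follows from the smooth left inverse.
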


\begin{proof}
The condition \(L\one=\one\) imposes \(n\) independent linear equations
on the \(n^2\) entries of \(L\). Hence \(\mathcal G\) is a smooth manifold
of dimension
\[
n^2-n=n(n-1).
\]

Consider the map
\[
F_P(L):=\Phi_L(P).
\]
It is continuous and satisfies \(F_P(I)=P\). Since every entry of \(P\)
is positive, sufficiently small changes of \(L\) leave every entry of
\(F_P(L)\) positive. Thus, near the identity, \(F_P\) takes values in the
space of transition kernels.

The map is injective near the identity, and in fact globally wherever its
image is a transition kernel. If
\[
F_P(L)=F_P(M)=Q,
\]
then, for any fixed deterministic policy \(\pi\),
\[
(I-\gamma Q^\pi)L
=
I-\gamma P^\pi
=
(I-\gamma Q^\pi)M.
\]
Since \(I-\gamma Q^\pi\) is invertible, \(L=M\). Moreover, the inverse on
the image is given by the smooth formula
\[
Q\longmapsto
(I-\gamma Q^\pi)^{-1}(I-\gamma P^\pi).
\]
Therefore the image near \(P\) is a smooth embedded manifold with the same
dimension as \(\mathcal G\).

To compute its tangent space, let
\[
L(\varepsilon)
=
I+\varepsilon H+O(\varepsilon^2),
\qquad
H\one=0.
\]
Then
\[
L(\varepsilon)^{-1}
=
I-\varepsilon H+O(\varepsilon^2).
\]
Substituting into \eqref{eq:Phi} gives
\begin{align*}
\Phi_{L(\varepsilon)}(P)_{s,a}
&=
\left(
P_{s,a}
+
\frac{\varepsilon}{\gamma}e_s^{\tr}H
+
O(\varepsilon^2)
\right)
\left(
I-\varepsilon H+O(\varepsilon^2)
\right)\\
&=
P_{s,a}
+
\varepsilon
\left(
\frac{1}{\gamma}e_s^{\tr}H-P_{s,a}H
\right)
+
O(\varepsilon^2).
\end{align*}
The coefficient of \(\varepsilon\) gives the tangent vectors in
\eqref{eq:orbit-tangent-space}.
\end{proof}

\begin{corollary}
\label{cor:equal-action-dimension}
Suppose the assumptions of Proposition~\ref{prop:orbit-geometry} hold and
every state has exactly \(m\) actions. Then the ambient kernel space has
dimension
\[
mn(n-1),
\]
whereas \([P]_{\mathrm{sa}}\) has local dimension
\[
n(n-1).
\]
\end{corollary}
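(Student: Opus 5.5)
This is essentially a direct specialization of Proposition~\ref{prop:orbit-geometry} combined with the ambient dimension count in \eqref{eq:ambient-dimension}.

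\textbf{Ambient dimension.} With $|A(s)|=m$ for every $s\in S$, we have $\sum_{s\in S}|A(s)|=mn$. Substituting into \eqref{eq:ambient-dimension} gives $D=(n-1)mn=mn(n-1)$. This rests on the earlier observation that each row $P_{s,a}$ is an element of the affine hyperplane $\{x\in\R^n: x\one=1\}$. Intersected with the nonnegativity constraints, this set is $(n-1)$-dimensional near a strictly positive row, so the product over the $mn$ rows has dimension $mn(n-1)$ near $P$.

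\textbf{Dimension of the class.} Since $P$ is strictly positive, Proposition~\ref{prop:orbit-geometry} applies unchanged. It states that, near $P$, the set $[P]_{\mathrm{sa}}$ is a smooth embedded manifold of dimension $n(n-1)$, which is the second claim.

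\textbf{Comparison.} It may be worth recording that the codimension is $(m-1)n(n-1)$, which vanishes only when $m=1$. For $m=1$ there are no choices, so every kernel is trivially equivalent, consistent with the class being open. This is not needed for the statement itself.

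I expect no real obstacle here. The only point to check is that Proposition~\ref{prop:orbit-geometry} genuinely requires no assumption on the action sets beyond positivity, and its proof uses none.
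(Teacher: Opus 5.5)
Your proposal is correct and matches the paper's proof: substitute $|A(s)|=m$ into \eqref{eq:ambient-dimension} to get $mn(n-1)$, and read off the local dimension $n(n-1)$ directly from Proposition~\ref{prop:orbit-geometry}. Your side remark that the codimension is $(m-1)n(n-1)$, and that it vanishes when $m=1$, is also accurate, though not needed.
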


\begin{proof}
When \(|A(s)|=m\) for every \(s\), equation
\eqref{eq:ambient-dimension} gives
\[
D=(n-1)\sum_{s\in S}m
=mn(n-1).
\]
The remaining statements follow from
Proposition~\ref{prop:orbit-geometry}.
\end{proof}

\begin{remark}
Corollary~\ref{cor:equal-action-dimension} shows that the ratio of the local
dimension of \([P]_{\mathrm{sa}}\) to the dimension of the ambient kernel
space is
\[
\frac{n(n-1)}{mn(n-1)}=\frac{1}{m}.
\]
Thus state--action equivalence becomes increasingly restrictive as the number
of available actions grows. Each additional action introduces further action
comparisons that must be preserved for two kernels to remain in the same
equivalence class. Consequently, although distinct kernels may still produce
exactly the same optimal-action data, the equivalence class becomes
progressively smaller relative to the ambient kernel space as \(m\) increases.
This is a local geometric statement; it does not mean that the observations
retain a literal fraction \(1/m\) of the statistical information about the
kernel.
\end{remark}

\begin{example}
\label{ex:nontrivial-pair}
Let $S=\{0,1\}$, let both states have actions $a,b$, and take $\gamma=4/5$. Set
\[
L=\begin{pmatrix}11/10&-1/10\\1/5&4/5\end{pmatrix},
\qquad L\one=\one.
\]
Define $P$ and $Q$ by
\[
\begin{array}{c|cc}
& a&b\\ \hline
P(\cdot\mid0,\cdot)&(13/20,7/20)&(7/20,13/20)\\
P(\cdot\mid1,\cdot)&(11/20,9/20)&(1/4,3/4)
\end{array}
\qquad
\begin{array}{c|cc}
& a&b\\ \hline
Q(\cdot\mid0,\cdot)&(23/36,13/36)&(11/36,25/36)\\
Q(\cdot\mid1,\cdot)&(2/3,1/3)&(1/3,2/3).
\end{array}
\]
Direct substitution gives $Q=\Phi_L(P)$. Hence $P\ne Q$, but the two kernels have the same optimal actions for every state--action reward.
\end{example}

\section{Transition-dependent rewards}
\label{sec:transition-rewards}

A transition-dependent reward assigns a number \(r(s,a,x)\) when action
\(a\) at state \(s\) leads to the next state \(x\). Under a kernel \(P\),
it induces the state--action reward
\begin{equation}
\bar r_P(s,a)
:=
\sum_{x\in S}P(x\mid s,a)r(s,a,x).
\label{eq:expected-transition-reward}
\end{equation}
Thus the same transition-dependent reward \(r\) may induce different
state--action rewards under different transition kernels.

This makes an individual transition row easy to test when a second action
is available at the same state. If \(P_{s,a}\ne Q_{s,a}\), one can choose
\(r(s,a,\cdot)\) to have expectation zero under one row and nonzero under
the other, and compare action \(a\) with another action at \(s\). The only
case requiring care is a state with one action. There is no comparison at
that state, although its row may still affect choices elsewhere through the
continuation value.

We use the following elementary fact.

\begin{lemma}
\label{lem:irrelevant-state-rewards}
For a fixed kernel \(K\), the state rewards for which every action is optimal
at every state form a vector subspace of \(\R^n\). If \(z\) is such a reward,
then for every state--action reward \(r\),
\[
\Opt_{K,r+z}(s)=\Opt_{K,r}(s)
\qquad(s\in S),
\]
where \((r+z)(s,a)=r(s,a)+z(s)\).
\end{lemma}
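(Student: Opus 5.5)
Both claims reduce to one observation: a state reward $z$ makes every action optimal at every state under $K$ exactly when its optimal value satisfies a linear condition. By the converse part of Lemma~\ref{lem:value-prescribing-reward}, if every action is optimal for $z$ and the optimal value is $v$, then $z=r_v^K$. Concretely,
\[
z(s)=v(s)-\gamma K_{s,a}v\qquad\text{for all } s,\ a\in A(s).
\]
Conversely, given $v$ with $K_{s,a}v$ independent of $a\in A(s)$ at every $s$, the reward $r_v^K$ is a state reward, and Lemma~\ref{lem:value-prescribing-reward} says it makes every action optimal.

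So I will identify the set in question with the image of the linear subspace
\[
W:=\{v\in\R^n:\ K_{s,a}v=K_{s,b}v\ \text{for all } s \text{ and } a,b\in A(s)\}
\]
under the linear map $v\mapsto (I-\gamma K^{\pi})v$, for any fixed deterministic policy $\pi$. On $W$ this map does not depend on $\pi$. This image is a subspace, which proves the first claim.

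For the second claim, fix such a $z$ and write $z=(I-\gamma K^\pi)u$ with $u\in W$. I will compare value functions rather than argue through the Bellman operator directly. For any reward $r$ and policy $\pi$, linearity of \eqref{eq:policy-value} gives
\[
V^\pi_{K,r+z}=V^\pi_{K,r}+(I-\gamma K^\pi)^{-1}z^\pi .
\]
Because $u\in W$, we have $z^\pi=(I-\gamma K^\pi)u$ for every $\pi$, so $V^\pi_{K,r+z}=V^\pi_{K,r}+u$.

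Next I pass to the action-comparison quantities. The shift by $u$ changes $r(s,a)+\gamma K_{s,a}V$ by
\[
z(s)+\gamma K_{s,a}u=u(s),
\]
and this does not depend on $a$. Two consequences follow.
\begin{itemize}
\item $V^*_{K,r}+u$ satisfies the optimality equation \eqref{eq:bellman-optimality} for $r+z$. By uniqueness of the fixed point, $V^*_{K,r+z}=V^*_{K,r}+u$.
\item Each quantity maximized in \eqref{eq:optimal-actions} for $r+z$ equals the corresponding quantity for $r$ plus $u(s)$, the same constant for every action at $s$. Hence the argmax sets agree at every state.
\end{itemize}

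The only delicate point is the characterization step. One must check that "every action optimal for the state reward $z$" really forces $K_{s,a}v$ to be constant in $a$, where $v=V^*_{K,z}$. This follows because $z(s)$ does not depend on $a$ and every action attains equality in \eqref{eq:bellman-optimality}. Everything else is linear bookkeeping.
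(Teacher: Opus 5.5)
Your proof is correct and follows essentially the paper's route. The paper proves the subspace claim by observing directly that the identities $w_i(s)=z_i(s)+\gamma K_{s,a}w_i$ are preserved under linear combinations, which is the same linearity you package as the image of $W$ under $I-\gamma K^\pi$; it proves the second claim with exactly your shift identity $r(s,a)+z(s)+\gamma K_{s,a}(v+u)=r(s,a)+\gamma K_{s,a}v+u(s)$, with $u=V^*_{K,z}$, and your intermediate computation of $V^\pi_{K,r+z}$ is harmless but not needed.
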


\begin{proof}
Suppose every action is optimal for state rewards \(z_1,\ldots,z_m\), and
write \(w_i=V^*_{K,z_i}\). Then
\[
w_i(s)=z_i(s)+\gamma K_{s,a}w_i
\qquad(s\in S,\ a\in A(s)).
\]
For scalars \(c_1,\ldots,c_m\), the vectors
\[
z=\sum_i c_i z_i,
\qquad
w=\sum_i c_i w_i
\]
satisfy the same identity, so every action is optimal for \(z\). This proves
the vector-space statement.

Now let \(v=V^*_{K,r}\). For every state and action,
\[
r(s,a)+z(s)+\gamma K_{s,a}(v+w)
=
r(s,a)+\gamma K_{s,a}v+w(s).
\]
The maximizing actions are therefore unchanged, and \(v+w\) is the optimal
value for \(r+z\).
\end{proof}

Let
\[
U := \{s\in S : |A(s)|=1\},
\]
and write $a_u$ for the unique action at $u\in U$. For $u\in U$, let $e_u$ denote the state reward
\[
e_u(s)=
\begin{cases}
1, & s=u,\\
0, & s\ne u.
\end{cases}
\]
Thus $e_u$ rewards each visit to $u$.
\begin{theorem}
\label{thm:transition-classification}
Let \(P\) and \(Q\) be transition kernels on the same labelled states and
actions. Then \(P\equiv_{\mathrm{tr}}Q\) if and only if:
\begin{enumerate}[label=\textup{(\roman*)}]
\item \(P\equiv_{\mathrm{sa}}Q\);
\item \(P_{s,a}=Q_{s,a}\) whenever \(|A(s)|\ge2\);
\item if \(u\in U\) and \(P_{u,a_u}\ne Q_{u,a_u}\), then every action is
optimal at every state under \(P\) for the state reward \(e_u\).
\end{enumerate}
Under \textup{(i)}, the last condition is unchanged if \(P\) is replaced by
\(Q\).
\end{theorem}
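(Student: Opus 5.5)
For necessity, I will specialise transition-dependent rewards and work with the induced rewards \(\bar r_P\) and \(\bar r_Q\) of \eqref{eq:expected-transition-reward}. For sufficiency, I will show that under (i)--(iii) these two induced rewards differ only by a state reward that is irrelevant in the sense of Lemma~\ref{lem:irrelevant-state-rewards}, and then transport optimal actions through Theorem~\ref{thm:sa-main}.

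\emph{Necessity.} Condition (i) is immediate, because a reward \(r(s,a,x)\) that does not depend on \(x\) is just a state--action reward.

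For (ii), fix \(s\) with \(|A(s)|\ge2\), \(a\in A(s)\) with \(P_{s,a}\ne Q_{s,a}\), and another action \(b\in A(s)\). Set \(d=Q_{s,a}-P_{s,a}\ne0\). Since \(d\one=0\), we may take \(f=d^{\tr}\), which satisfies \(P_{s,a}f\ne Q_{s,a}f\); after adding a multiple of \(\one\) we may assume \(P_{s,a}f=0\) and \(Q_{s,a}f=\delta\ne0\). Start from the value-prescribing reward \(r_0^P\equiv0\), for which every action is optimal under both kernels by (i) and Lemma~\ref{lem:value-prescribing-reward}. Now put \(r(s,a,x)=f(x)\) and \(r=0\) elsewhere. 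Under \(P\) the induced reward is still \(0\), so \(b\) remains optimal at \(s\). Under \(Q\) the induced reward is \(\delta\,\one_{(s,a)}\). If \(\delta>0\), then \(a\) becomes strictly better than \(b\) at \(s\): the policy value with \(a\) at \(s\) exceeds \(0\), and one checks that \(b\) is no longer optimal. If \(\delta<0\), then \(a\) becomes strictly worse. Either way \(\Opt\) changes at \(s\), which is a contradiction.

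For (iii), take \(u\in U\) with \(P_{u,a_u}\ne Q_{u,a_u}\). Using the same kind of \(f\), every transition-dependent reward supported on \((u,a_u,\cdot)\) induces \(0\) under \(P\) and \(c\,e_u\) under \(Q\), where \(c\ne0\) can be chosen freely. Adding this to an arbitrary state--action reward \(r\) leaves \(\Opt_P\) unchanged and replaces \(\Opt_Q\) by \(\Opt_{Q,r+ce_u}\). Equivalence then forces
\[
\Opt_{Q,r+ce_u}=\Opt_{Q,r}
\]
for all \(r\) and \(c\). Take \(r=r_v^Q\), so that every action is optimal under \(Q\). Then every action must stay optimal for \(r_v^Q+ce_u\). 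By Lemma~\ref{lem:value-prescribing-reward} this gives \(r_v^Q+ce_u=r_{v'}^Q\). Hence \(ce_u\) equals a reward of the form \(r_{v'-v}^Q\), i.e.\ every action is optimal for \(e_u\) under \(Q\). The final sentence of the theorem transfers this to \(P\). That transfer will be proved via Theorem~\ref{thm:sa-main}: \(A^\pi_{Q,e_u}=A^\pi_{P,e_u}\), so "all advantages vanish for every \(\pi\)" is symmetric between the two kernels.

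\emph{Sufficiency.} Given a transition-dependent reward \(r\), (ii) gives \(\bar r_P(s,a)=\bar r_Q(s,a)\) at every state with at least two actions. At a state \(u\in U\) the two induced rewards differ by a scalar \(c_u\), and \(c_u\) can be nonzero only if \(P_{u,a_u}\ne Q_{u,a_u}\). Hence
\[
\bar r_Q=\bar r_P+z,\qquad z=\sum_{u\in U:\,P_{u,a_u}\ne Q_{u,a_u}}c_u e_u .
\]
By (iii) and the vector-space part of Lemma~\ref{lem:irrelevant-state-rewards}, every action is optimal for \(z\) under \(Q\) (using the symmetric version of (iii)). The lemma then gives \(\Opt_{Q,\bar r_Q}=\Opt_{Q,\bar r_P}\), and (i) gives \(\Opt_{Q,\bar r_P}=\Opt_{P,\bar r_P}\). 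Finally, \(\Opt\) for a transition-dependent reward equals \(\Opt\) for the induced reward, since the Bellman equation involves only expectations.

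\textbf{Main obstacle.} The delicate step is the strictness claim in (ii): adding \(\delta\) at a single pair \((s,a)\) must change \(\Opt_Q(s)\). Under \(Q\), the policy values of policies using \(a\) at \(s\) increase by a positive amount when \(\delta>0\), because \(\delta(I-\gamma Q^\pi)^{-1}e_s\) has a positive \(s\)-entry. Policies using \(b\) are unaffected. So \(b\) cannot stay optimal at \(s\), and the case \(\delta<0\) is symmetric.
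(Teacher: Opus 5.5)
Your proof is correct and follows essentially the paper's route: necessity uses a reward supported on the single pair \((s,a)\) that induces zero under \(P\) and a positive reward under \(Q\). Sufficiency writes \(\bar r_Q=\bar r_P+z\) with \(z\) in the span of the \(e_u\), which make every action optimal, and then applies Lemma~\ref{lem:irrelevant-state-rewards} together with (i). The detours in (iii) are unnecessary: the reward supported on \((u,a_u)\) already gives \(\Opt_{Q,e_u}=\Opt_{P,0}\), so every action is optimal without invoking the converse of Lemma~\ref{lem:value-prescribing-reward}. The transfer from \(Q\) to \(P\) then follows directly from (i) applied to the state--action reward \(e_u\), with no need for advantage invariance.
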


\begin{proof}
Assume first that \(P\equiv_{\mathrm{tr}}Q\). Every state--action reward
\(\rho(s,a)\) is the transition reward \(r(s,a,s')=\rho(s,a)\), so
\(P\equiv_{\mathrm{sa}}Q\).

Suppose that \(P_{s,a}\ne Q_{s,a}\) at a state with at least two actions.
Write \(p=P_{s,a}\) and \(q=Q_{s,a}\). Since \(p\) and \(q\) are distinct
probability rows, there is an \(h\in\R^n\) such that
\[
ph=0,
\qquad
qh=1.
\]
Choose \(b\in A(s)\setminus\{a\}\), and set
\[
r(s,a,x)=h(x),
\qquad
r(t,c,x)=0\quad\text{when }(t,c)\ne(s,a).
\]
Under \(P\), the expected immediate reward is zero for every state and
action, so every action is optimal. Under \(Q\), a policy choosing \(b\) at
\(s\) receives zero reward everywhere and therefore has value zero. If every
action were optimal under \(Q\), then this policy would be optimal. However,
at state \(s\), taking action \(a\) gives immediate expected reward \(1\),
whereas taking action \(b\) gives immediate expected reward \(0\). Hence
\(a\) is strictly better than \(b\) at \(s\), a contradiction. Therefore the
rows must agree at every state with at least two actions.

Now let \(u\in U\) and suppose
\(P_{u,a_u}\ne Q_{u,a_u}\). Choose \(h\) as above and support \(r\) only on
\((u,a_u)\). Its expected reward is zero under \(P\) and is the state reward
\(e_u\) under \(Q\). Transition-reward equivalence therefore implies that
every action is optimal under \(Q\) for \(e_u\). Since
\(P\equiv_{\mathrm{sa}}Q\), the same is true under \(P\).

Conversely, assume \textup{(i)}--\textup{(iii)}, and fix a transition reward
\(r\). Put
\[
\rho(s,a)=\bar r_P(s,a),
\qquad
d(s,a)=\bar r_Q(s,a)-\bar r_P(s,a).
\]
By \textup{(ii)}, \(d(s,a)=0\) at every state with at least two actions. It
also vanishes at a one-action state whose row is the same under \(P\) and
\(Q\). Hence
\[
d=\sum_{u\in D}c_u e_u,
\qquad
D:=\{u\in U:P_{u,a_u}\ne Q_{u,a_u}\},
\]
for suitable scalars \(c_u\), where state rewards are identified with
state--action rewards constant across the available actions. By \textup{(i)}
and \textup{(iii)}, every action is optimal under \(Q\) for each \(e_u\)
with \(u\in D\). Lemma~\ref{lem:irrelevant-state-rewards} therefore gives
\[
\Opt_{Q,\rho+d}=\Opt_{Q,\rho}.
\]
Using \(P\equiv_{\mathrm{sa}}Q\),
\[
\Opt_{P,r}
=
\Opt_{P,\rho}
=
\Opt_{Q,\rho}
=
\Opt_{Q,\rho+d}
=
\Opt_{Q,r}.
\]
Thus \(P\equiv_{\mathrm{tr}}Q\).
\end{proof}

Condition \textup{(iii)} can be checked directly. Fix any deterministic
policy \(\pi\), and for \(u\in U\) set
\begin{equation}
h_u:=(I-\gamma P^\pi)^{-1}e_u.
\label{eq:singleton-visit-vector}
\end{equation}
Then every action is optimal for the state reward \(e_u\) if and only if
\begin{equation}
(P_{s,a}-P_{s,b})h_u=0
\qquad
(s\in S,\ a,b\in A(s)).
\label{eq:singleton-row-test}
\end{equation}
Indeed, \(h_u\) is the value of \(\pi\) for the reward \(e_u\), and the
displayed condition says that replacing \(\pi(s)\) by any other action leaves
the value unchanged.

\begin{corollary}
\label{cor:transition-identification}
Suppose that for every \(u\in U\), the state reward \(e_u\) makes some action
non-optimal at some state. Then transition-dependent rewards determine
\(P\) exactly. In particular, exact identification holds when every state
has at least two actions.
\end{corollary}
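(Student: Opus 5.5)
The plan is to read the corollary off Theorem~\ref{thm:transition-classification}. Suppose \(Q\) is a transition kernel with \(P\equiv_{\mathrm{tr}}Q\); we show \(Q=P\) row by row, since identification means the class \([P]_{\mathrm{tr}}\) is the singleton \(\{P\}\).

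\emph{Rows at multi-action states.} Condition \textup{(ii)} of Theorem~\ref{thm:transition-classification} gives \(Q_{s,a}=P_{s,a}\) whenever \(|A(s)|\ge2\). Only the rows \(P_{u,a_u}\) with \(u\in U\) remain.

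\emph{Rows at one-action states.} Take \(u\in U\) and suppose, for contradiction, that \(P_{u,a_u}\ne Q_{u,a_u}\). Condition \textup{(iii)} then says every action is optimal at every state under \(P\) for the state reward \(e_u\). This contradicts the hypothesis that \(e_u\) makes some action non-optimal at some state. Hence \(Q_{u,a_u}=P_{u,a_u}\) for all \(u\in U\), and so \(Q=P\).

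One point needs care: the hypothesis refers to optimality under \(P\), while \textup{(iii)} might be read under \(Q\). The final sentence of Theorem~\ref{thm:transition-classification} says that, given \textup{(i)}, the condition is the same under either kernel. So no transfer argument is needed.

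\emph{The ``in particular'' clause.} If every state has at least two actions, then \(U=\varnothing\) and the hypothesis holds vacuously, so identification follows from \textup{(ii)} alone. I expect no real obstacle: this is a direct specialisation of the classification, and the only subtlety is the \(P\) versus \(Q\) issue already handled by the theorem.
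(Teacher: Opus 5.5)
Your proposal is correct and is essentially the paper's own proof: condition \textup{(ii)} of Theorem~\ref{thm:transition-classification} fixes the rows at multi-action states, condition \textup{(iii)} together with the hypothesis rules out a differing row at any \(u\in U\), and \(U=\varnothing\) gives the final clause. Your caveat about \(P\) versus \(Q\) is harmless but not needed, because condition \textup{(iii)} is already stated under \(P\).
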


\begin{proof}
Let \(Q\equiv_{\mathrm{tr}}P\). By
Theorem~\ref{thm:transition-classification}\textup{(ii)}, \(P\) and \(Q\)
agree at every state with at least two actions. If they differed at some
\(u\in U\), condition \textup{(iii)} would imply that every action is
optimal for the reward \(e_u\), contrary to the hypothesis. Hence every row
agrees and \(P=Q\). When every state has at least two actions, \(U\) is empty.
\end{proof}

\begin{proposition}
    \label{prop:generic-transition-identification}
Suppose \(n\ge2\) and at least one state has at least two actions. Among
strictly positive transition kernels, the kernels not identified by all
transition-dependent rewards lie in a set of Lebesgue measure zero in the
usual coordinates on the kernel space.
\end{proposition}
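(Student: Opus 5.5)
By Corollary~\ref{cor:transition-identification}, a kernel \(P\) fails to be identified by transition-dependent rewards only if there is some \(u\in U\) for which the state reward \(e_u\) makes every action optimal at every state. So the plan is to show that, for each fixed \(u\in U\), the set of strictly positive kernels with this property has measure zero. Since \(U\) is finite, the union of these sets is then also null. If \(U\) is empty there is nothing to prove. Otherwise fix \(u\in U\), and fix a state \(s_0\) with two distinct actions \(a,b\in A(s_0)\); such a state exists by hypothesis. By the test \eqref{eq:singleton-row-test}, with \(h_u=(I-\gamma P^\pi)^{-1}e_u\) for a fixed policy \(\pi\), the bad set is contained in the zero set of the single function
\[
g(P):=(P_{s_0,a}-P_{s_0,b})(I-\gamma P^\pi)^{-1}e_u .
\]
By Cramer's rule, \(g\) is a rational function of the kernel coordinates whose denominator \(\det(I-\gamma P^\pi)\) does not vanish. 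Its numerator \(N(P)\) is therefore a polynomial, and the zero set of a polynomial that is not identically zero has Lebesgue measure zero. It thus suffices to show that \(N\) is not identically zero on the open set of strictly positive kernels, that is, to find one such kernel with \(g\neq0\). Because the kernel space is an affine image of an open subset of \(\R^D\), and polynomials stay polynomials in those coordinates, this is enough.

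The main obstacle is constructing this witness. To do it, choose \(\pi(s_0)=a\). The vector \(h=h_u\) is the expected discounted number of visits to \(u\) under \(\pi\). It is nonconstant whenever \(n\ge2\): we have \(h=e_u+\gamma P^\pi h\), so if \(h=c\one\) then \(c(1-\gamma)\one=e_u\), which is impossible because \(e_u\) is not a multiple of \(\one\) when \(n\ge2\).

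Next, take any strictly positive kernel and perturb only the row \(P_{s_0,b}\), keeping it strictly positive and summing to one. This row does not enter \(P^\pi\), since \(\pi(s_0)=a\), so \(h\) is unchanged. Meanwhile \(P_{s_0,b}h\) ranges over an open interval, because \(h\) is nonconstant and the row can move within the simplex interior along the direction \(e_i-e_j\) with \(h_i\ne h_j\). Hence we can choose \(P_{s_0,b}\) so that \(P_{s_0,b}h\ne P_{s_0,a}h\), giving \(g\ne0\).

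Finally, this also handles the bookkeeping of the open domain: the zero set of a nonzero polynomial has measure zero on all of \(\R^D\), so in particular on the open set of positive kernels. Together with the finite union over \(u\in U\), this completes the argument.
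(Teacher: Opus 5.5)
Your proposal is correct and takes essentially the same route as the paper. You use Corollary~\ref{cor:transition-identification} and the test \eqref{eq:singleton-row-test} to reduce to the zero set of $(P_{s_0,a}-P_{s_0,b})(I-\gamma P^\pi)^{-1}e_u$, clear the determinant to get a polynomial, and take a finite union over $u\in U$. The only difference is the non-vanishing witness: the paper makes every row of $P^\pi$ uniform and computes $f_u(P)=\delta$ explicitly, whereas you show $h_u$ is nonconstant and perturb the row at $s_0$ that $\pi$ does not use, which is equally valid.
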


\begin{proof}
Fix a state \(s_0\) with distinct actions \(a,b\), and fix a deterministic
policy \(\pi\) with \(\pi(s_0)=b\). For a singleton-action state \(u\), set
\[
f_u(P)
:=
(P_{s_0,a}-P_{s_0,b})(I-\gamma P^\pi)^{-1}e_u.
\]
If the reward \(e_u\) makes every action optimal, then
\(f_u(P)=0\) by \eqref{eq:singleton-row-test}. Multiplying by \(\det(I-\gamma P^\pi)\) writes \(f_u\) as a
polynomial divided by a nonzero denominator.

The numerator polynomial is not identically zero. To see this, choose \(P\)
so that every row of \(P^\pi\) is uniform on \(S\). In particular,
\(P_{s_0,b}\) is the uniform row. For some \(v\ne u\), take
\[
P_{s_0,a}=P_{s_0,b}+\delta(e_u^{\tr}-e_v^{\tr})
\]
with \(\delta>0\) small. Then the kernel is strictly positive and
\[
(I-\gamma P^\pi)^{-1}e_u
=
e_u+\frac{\gamma}{n(1-\gamma)}\one,
\]
so \(f_u(P)=\delta\ne0\). Hence the condition \(f_u(P)=0\) holds on a
measure-zero set.

If \(P\) is not identified by all transition-dependent rewards, then by
Corollary~\ref{cor:transition-identification} there is some \(u\in U\) for
which \(e_u\) makes every action optimal. For this \(u\),
\eqref{eq:singleton-row-test} implies \(f_u(P)=0\). Hence the kernels that
are not identified lie in the finite union
\[
\bigcup_{u\in U}\{P:f_u(P)=0\}.
\]
Each set in this union has measure zero, so the union has measure zero.
\end{proof}

\begin{example}
\label{ex:hidden-singleton-row}
Let
\[
S=\{0,1\},
\qquad
A(0)=\{a,b\},
\qquad
A(1)=\{c\},
\]
and define
\[
P_{0,a}=P_{0,b}=Q_{0,a}=Q_{0,b}
=\left(\frac12,\frac12\right),
\]
while
\[
P_{1,c}=(1,0),
\qquad
Q_{1,c}=\left(\frac12,\frac12\right).
\]
\end{example}

\begin{figure}[H]
\centering
\begin{tikzpicture}[
    >=Stealth,
    state/.style={
        circle,
        draw,
        semithick,
        minimum size=9mm,
        inner sep=0pt
    },
    every edge/.style={
        draw,
        ->,
        semithick
    },
    every edge quotes/.style={
        font=\scriptsize,
        fill=white,
        inner sep=1.5pt
    }
]

\begin{scope}[xshift=0cm]
    \node[font=\small\bfseries] at (1.7,1.45) {Kernel \(P\)};

    \node[state] (P0) at (0,0) {\(0\)};
    \node[state] (P1) at (3.4,0) {\(1\)};

    \path
        (P0) edge[loop above]
        node[above,font=\scriptsize]
        {\(a,b:\tfrac12\)}
        (P0)

        (P0) edge[bend left=17]
        node[above,font=\scriptsize]
        {\(a,b:\tfrac12\)}
        (P1)

        (P1) edge[bend left=17]
        node[below,font=\scriptsize]
        {\(c:1\)}
        (P0);
\end{scope}

\begin{scope}[xshift=7.4cm]
    \node[font=\small\bfseries] at (1.7,1.45) {Kernel \(Q\)};

    \node[state] (Q0) at (0,0) {\(0\)};
    \node[state] (Q1) at (3.4,0) {\(1\)};

    \path
        (Q0) edge[loop above]
        node[above,font=\scriptsize]
        {\(a,b:\tfrac12\)}
        (Q0)

        (Q0) edge[bend left=17]
        node[above,font=\scriptsize]
        {\(a,b:\tfrac12\)}
        (Q1)

        (Q1) edge[bend left=17]
        node[below,font=\scriptsize]
        {\(c:\tfrac12\)}
        (Q0)

        (Q1) edge[loop above]
        node[above,font=\scriptsize]
        {\(c:\tfrac12\)}
        (Q1);
\end{scope}

\end{tikzpicture}
\caption{The kernels in Example~\ref{ex:hidden-singleton-row}. At state
\(0\), actions \(a\) and \(b\) have the same transition row under both
kernels. The only changed row is attached to the unique action \(c\) at
state \(1\), so the change cannot affect which actions are optimal.}
\label{fig:hidden-singleton-row}
\end{figure}

Thus \(P\) and \(Q\) have different transition probabilities but always give
the same optimal actions. The difference is hidden because state \(1\) has
only one possible action, while the two actions at state \(0\) behave in
exactly the same way.

\section{State rewards}
\label{sec:state-rewards}

We now consider rewards of the form
\[
r:S\to\R,
\]
so the reward depends only on the current state. The optimal value
function satisfies
\[
V^*_{P,r}(s)
=
r(s)
+
\gamma\max_{a\in A(s)}P_{s,a}V^*_{P,r}.
\]

It is enough to consider deterministic stationary policies. Indeed,
every finite discounted MDP has an optimal policy of this form, and a
deterministic stationary policy is optimal exactly when it chooses an
optimal action at every state.

For a deterministic stationary policy \(\pi\), let \(P^\pi\) be the
transition matrix whose \(s\)-th row is \(P_{s,\pi(s)}\). Its value
function is
\[
V^\pi_{P,r}
=
(I-\gamma P^\pi)^{-1}r.
\]

Define
\[
\mathcal R_P(\pi)
:=
\left\{
r\in\R^n:
(P_{s,\pi(s)}-P_{s,a})
(I-\gamma P^\pi)^{-1}r
\geq0
\text{ for every }s\in S,\ a\in A(s)
\right\}.
\]
Thus \(\mathcal R_P(\pi)\) is the set of state rewards for which \(\pi\) is optimal under \(P\). Such policy regions for discounted MDPs go back to Smallwood~\cite{smallwood1966}.

\begin{lemma}
\label{lem:state-policy-region}
A deterministic stationary policy \(\pi\) is optimal under the state
reward \(r\) if and only if
\[
r\in\mathcal R_P(\pi).
\]
\end{lemma}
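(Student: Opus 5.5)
The plan is to combine policy evaluation with the standard characterization of optimality recalled in Section~\ref{sec:setup}: a deterministic policy is optimal exactly when it has no positive one-step improvement with respect to its own value. Fix \(\pi\) and a state reward \(r\), and set \(v:=V^\pi_{P,r}=(I-\gamma P^\pi)^{-1}r\). Since the reward depends only on the state, the advantage \eqref{eq:advantage} at \((s,a)\) is
\[
A^\pi_{P,r}(s,a)=r(s)+\gamma P_{s,a}v-v(s).
\]
Evaluating \(v(s)=r(s)+\gamma P_{s,\pi(s)}v\) at state \(s\) and subtracting gives
\[
A^\pi_{P,r}(s,a)=-\gamma\,(P_{s,\pi(s)}-P_{s,a})v.
\]
Because \(\gamma>0\), the condition \(A^\pi_{P,r}(s,a)\le0\) for all \(s\) and \(a\in A(s)\) is exactly the defining inequality of \(\mathcal R_P(\pi)\). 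So it suffices to show that \(\pi\) is optimal if and only if all its advantages are nonpositive.

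\emph{If \(\pi\) is optimal:} then \(v=V^*_{P,r}\). The Bellman optimality equation \eqref{eq:bellman-optimality} gives \(r(s)+\gamma P_{s,a}v\le v(s)\) for every action \(a\), which is \(A^\pi_{P,r}(s,a)\le0\).

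\emph{If all advantages are nonpositive:} then
\[
v(s)\ge\max_{a\in A(s)}\{r(s)+\gamma P_{s,a}v\},
\]
and the action \(\pi(s)\) attains equality. Hence \(v\) is a fixed point of the optimality operator. That operator is a \(\gamma\)-contraction, by the same estimate as in the proof of Lemma~\ref{lem:value-prescribing-reward}, so its fixed point is unique and \(v=V^*_{P,r}\). Since \(\pi(s)\) attains the maximum at every state, \(\pi(s)\in\Opt_{P,r}(s)\) for all \(s\). By the characterization from \cite{puterman1994}, \(\pi\) is optimal.

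There is no real obstacle. The only point needing care is the sufficiency direction, where one must note that \(v\) solves the optimality equation, not merely that it is a policy value. The contraction argument already used in Lemma~\ref{lem:value-prescribing-reward} handles this.
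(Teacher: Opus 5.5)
Your proof is correct and follows essentially the same route as the paper. Both reduce optimality of \(\pi\) to the one-step comparison \(r(s)+\gamma P_{s,\pi(s)}V^\pi_{P,r}\ge r(s)+\gamma P_{s,a}V^\pi_{P,r}\), cancel \(r(s)\), and substitute \(V^\pi_{P,r}=(I-\gamma P^\pi)^{-1}r\); the difference is that you justify this policy-improvement criterion via the Bellman equation and the contraction argument, whereas the paper simply asserts it as standard.
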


\begin{proof}
The policy \(\pi\) is optimal if and only if, at every state, its
chosen action is at least as good as every other action. Thus
\[
r(s)+\gamma P_{s,\pi(s)}V^\pi_{P,r}
\geq
r(s)+\gamma P_{s,a}V^\pi_{P,r}
\]
for every \(s\in S\) and \(a\in A(s)\). Cancelling \(r(s)\) and using
\[
V^\pi_{P,r}=(I-\gamma P^\pi)^{-1}r
\]
gives
\[
(P_{s,\pi(s)}-P_{s,a})
(I-\gamma P^\pi)^{-1}r
\geq0.
\]
These are precisely the inequalities defining
\(\mathcal R_P(\pi)\).
\end{proof}

\begin{theorem}
\label{thm:state-reward-classification}
Let \(P\) and \(Q\) be transition kernels on the same labelled state
and action spaces. Then
\[
P\equiv_{\mathrm{s}}Q
\]
if and only if
\[
\mathcal R_P(\pi)=\mathcal R_Q(\pi)
\]
for every deterministic stationary policy \(\pi\).
\end{theorem}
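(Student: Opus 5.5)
Both directions rest on one reformulation. By Lemma~\ref{lem:state-policy-region}, for a state reward \(r\) the set of deterministic stationary policies optimal under \(P\) is \(\{\pi: r\in\mathcal R_P(\pi)\}\). The plan is to show that this set of optimal policies and the family of sets \(\Opt_{P,r}(s)\), \(s\in S\), determine each other.

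\textbf{From optimal actions to optimal policies.} A deterministic policy is optimal exactly when it picks an optimal action at every state. So the set of optimal deterministic policies is the product \(\prod_{s}\Opt_{P,r}(s)\), and it is determined by the sets \(\Opt_{P,r}(s)\).

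\textbf{From optimal policies to optimal actions.} An action \(a\) lies in \(\Opt_{P,r}(s)\) if and only if some optimal deterministic policy \(\pi\) has \(\pi(s)=a\). Such a policy exists: set \(\pi(s)=a\) and choose \(\pi(t)\in\Opt_{P,r}(t)\) at the other states, which are nonempty. This is the same construction used in the proof of Theorem~\ref{thm:sa-main}. Hence
\[
\Opt_{P,r}(s)=\{\pi(s): r\in\mathcal R_P(\pi)\},
\]
so the optimal actions are determined by which regions \(\mathcal R_P(\pi)\) contain \(r\).

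\textbf{Direction \(\Leftarrow\).} Suppose \(\mathcal R_P(\pi)=\mathcal R_Q(\pi)\) for every \(\pi\). For each state reward \(r\), the policies \(\pi\) with \(r\in\mathcal R_P(\pi)\) are then the same as those with \(r\in\mathcal R_Q(\pi)\). By the displayed formula, \(\Opt_{P,r}(s)=\Opt_{Q,r}(s)\) for every \(s\).

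\textbf{Direction \(\Rightarrow\).} Suppose \(P\equiv_{\mathrm s}Q\). Fix \(\pi\) and \(r\in\mathcal R_P(\pi)\). By Lemma~\ref{lem:state-policy-region}, \(\pi\) is optimal under \(P\), so \(\pi(s)\in\Opt_{P,r}(s)=\Opt_{Q,r}(s)\) for every \(s\). Therefore \(\pi\) is optimal under \(Q\), and Lemma~\ref{lem:state-policy-region} gives \(r\in\mathcal R_Q(\pi)\). Swapping \(P\) and \(Q\) gives the reverse inclusion.

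There is no real obstacle. The only point needing care is the characterisation of optimal actions through optimal policies, which relies on the nonemptiness of each \(\Opt_{P,r}(t)\) to complete \(\pi\) at the remaining states.
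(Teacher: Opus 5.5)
Your proposal is correct and takes essentially the same route as the paper. Both directions pass through Lemma~\ref{lem:state-policy-region}, using that a deterministic policy is optimal exactly when it picks an optimal action at every state, and that any optimal action at \(s\) extends to an optimal deterministic policy. Your forward direction argues one reward at a time instead of first concluding that the sets of optimal policies coincide, but this is only a cosmetic difference.
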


\begin{proof}
Suppose first that \(P\equiv_{\mathrm{s}}Q\). A deterministic stationary policy
is optimal exactly when it chooses an optimal action at every state.
Since \(P\) and \(Q\) have the same optimal-action sets for every
state reward, they have the same optimal deterministic stationary
policies. Lemma~\ref{lem:state-policy-region} therefore gives
\[
\mathcal R_P(\pi)=\mathcal R_Q(\pi)
\]
for every \(\pi\).

Conversely, suppose that
\[
\mathcal R_P(\pi)=\mathcal R_Q(\pi)
\]
for every deterministic stationary policy \(\pi\). By
Lemma~\ref{lem:state-policy-region}, \(P\) and \(Q\) have the same
optimal deterministic stationary policies for every state reward.

Fix \(r\in\R^n\), \(s\in S\), and \(a\in A(s)\). The action \(a\) is
optimal at \(s\) under \(P\) if and only if there is an optimal
deterministic stationary policy \(\pi\) such that
\[
\pi(s)=a.
\]
Indeed, one may choose \(a\) at \(s\) and choose any optimal action at
each other state. Since the optimal deterministic stationary policies
are the same under \(P\) and \(Q\), it follows that \(a\) is optimal
at \(s\) under \(P\) if and only if it is optimal at \(s\) under
\(Q\). Hence
\[
\Opt_{P,r}(s)=\Opt_{Q,r}(s)
\]
for every \(r\) and \(s\), so
\[
P\equiv_{\mathrm{s}}Q.
\]
\end{proof}

The theorem gives a finite classification. There are only finitely
many deterministic stationary policies, and for each policy \(\pi\),
the set \(\mathcal R_P(\pi)\) is determined by finitely many linear
inequalities in \(r\).

\begin{corollary}
\label{cor:state-action-implies-state}
If
\[
P\equiv_{\mathrm{sa}}Q,
\]
then
\[
P\equiv_{\mathrm{s}}Q.
\]
\end{corollary}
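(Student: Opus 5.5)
The plan is to observe that every state reward \(r(s)\) is a particular state--action reward. Under the identification \(\rho(s,a):=r(s)\), the Bellman optimality equation \eqref{eq:bellman-optimality} and the argmax set \eqref{eq:optimal-actions} for \(\rho\) coincide with those for the state reward \(r\). Concretely, \(\rho(s,a)+\gamma P_{s,a}v=r(s)+\gamma P_{s,a}v\) for every \(v\). Hence the optimal value under \(\rho\) equals the optimal value under \(r\), and \(\Opt_{P,r}(s)=\Opt_{P,\rho}(s)\). The same holds with \(Q\) in place of \(P\).

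Given \(P\equiv_{\mathrm{sa}}Q\), I would fix a state reward \(r\) and apply the definition of state--action equivalence to \(\rho\). This yields
\[
\Opt_{P,r}(s)=\Opt_{P,\rho}(s)=\Opt_{Q,\rho}(s)=\Opt_{Q,r}(s)
\]
for every \(s\in S\), which is exactly \(P\equiv_{\mathrm{s}}Q\).

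As a cross-check, one can also argue through Theorem~\ref{thm:state-reward-classification}. By Theorem~\ref{thm:sa-main}(ii), advantages relative to any deterministic \(\pi\) agree under \(P\) and \(Q\) for the reward \(\rho\). Since these advantages are \(\gamma(P_{s,a}-P_{s,\pi(s)})(I-\gamma P^\pi)^{-1}r\), this gives \(\mathcal R_P(\pi)=\mathcal R_Q(\pi)\) for every \(\pi\). The direct argument is enough, however, and I would present only that one.

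There is no real obstacle. The only point needing care is to state explicitly that embedding a state reward as an action-independent state--action reward leaves both the optimal value and the optimal-action sets unchanged, since the two definitions were phrased separately.
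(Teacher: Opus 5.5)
Your proposal is correct and matches the paper's proof, which likewise embeds a state reward \(r\) as the action-independent state--action reward \(\widetilde r(s,a):=r(s)\) and applies the definition of \(\equiv_{\mathrm{sa}}\). Your explicit check that this embedding leaves the Bellman equation and the optimal-action sets unchanged is a small but welcome addition, since the paper leaves that step implicit.
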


\begin{proof}
Every state reward \(r:S\to\R\) can be regarded as the state--action
reward
\[
\widetilde r(s,a):=r(s).
\]
Thus equality of the optimal-action sets for every state--action
reward implies equality for every state reward.
\end{proof}

The converse does not hold.

\begin{example}
\label{ex:state-strictly-weaker}
State-reward equivalence is strictly weaker than state--action
equivalence, even with two states, two actions, and strictly positive
transition probabilities.

Let
\[
S=\{0,1\},
\qquad
A(0)=A(1)=\{a,b\},
\]
and define
\[
P^a
=
\frac1{10}
\begin{pmatrix}
8&2\\
7&3
\end{pmatrix},
\qquad
P^b
=
\frac1{10}
\begin{pmatrix}
4&6\\
5&5
\end{pmatrix},
\]
and
\[
Q^a
=
\frac1{10}
\begin{pmatrix}
6&4\\
8&2
\end{pmatrix},
\qquad
Q^b
=
\frac1{10}
\begin{pmatrix}
5&5\\
5&5
\end{pmatrix}.
\]

We first show that
\[
P\equiv_{\mathrm{s}}Q.
\]
Let \(R\) denote either \(P\) or \(Q\), and let
\[
d_R:=V^*_{R,r}(1)-V^*_{R,r}(0).
\]
For both kernels and at both states, action \(b\) has a larger
probability of moving to state \(1\) than action \(a\). Therefore,
if \(d_R>0\), action \(b\) is uniquely optimal at both states; if
\(d_R<0\), action \(a\) is uniquely optimal at both states; and if
\(d_R=0\), both actions are optimal.

Suppose that \(d_R>0\). Since action \(b\) is then optimal at both
states, the optimality identities at states \(1\) and \(0\) give
\[
d_R
=
r(1)-r(0)
+
\gamma
\bigl(
R(1\mid1,b)-R(1\mid0,b)
\bigr)d_R.
\]
Hence
\[
d_R
=
\frac{r(1)-r(0)}
{1-\gamma
\bigl(
R(1\mid1,b)-R(1\mid0,b)
\bigr)}.
\]
The denominator is strictly positive, so \(d_R>0\) implies
\[
r(1)>r(0).
\]

Similarly, if \(d_R<0\), action \(a\) is optimal at both states, and
the optimality identities give
\[
d_R
=
\frac{r(1)-r(0)}
{1-\gamma
\bigl(
R(1\mid1,a)-R(1\mid0,a)
\bigr)}.
\]
Again the denominator is strictly positive, so \(d_R<0\) implies
\[
r(1)<r(0).
\]
Finally, if \(d_R=0\), the optimality identities at the two states give
\[
r(1)=r(0).
\]
It follows that
\[
\operatorname{sgn}(d_R)
=
\operatorname{sgn}\bigl(r(1)-r(0)\bigr).
\]

Consequently, for both \(P\) and \(Q\),
\[
\Opt_{R,r}(s)
=
\begin{cases}
\{b\},&r(1)>r(0),\\[2mm]
\{a\},&r(1)<r(0),\\[2mm]
\{a,b\},&r(1)=r(0),
\end{cases}
\]
at both states. Hence
\[
P\equiv_{\mathrm{s}}Q.
\]

We now show that
\[
P\not\equiv_{\mathrm{sa}}Q.
\]
If \(P\equiv_{\mathrm{sa}}Q\), then by the state--action
classification there would be an invertible matrix \(L\) satisfying
\[
L\one=\one
\]
and
\[
(Q_{s,b}-Q_{s,a})L
=
P_{s,b}-P_{s,a}
\]
for both states \(s\).

Let
\[
u=(-1,1).
\]
The action differences are
\[
P_{0,b}-P_{0,a}
=
\frac4{10}u,
\qquad
P_{1,b}-P_{1,a}
=
\frac2{10}u,
\]
whereas
\[
Q_{0,b}-Q_{0,a}
=
\frac1{10}u,
\qquad
Q_{1,b}-Q_{1,a}
=
\frac3{10}u.
\]

Since \(L\one=\one\),
\[
(uL)\one=u\one=0.
\]
The zero-sum subspace of \(\R^2\) is spanned by \(u\), so
\[
uL=\lambda u
\]
for some \(\lambda\in\R\). The identity at state \(0\) would then give
\[
\frac1{10}\lambda u
=
\frac4{10}u,
\]
and hence
\[
\lambda=4.
\]
The identity at state \(1\) would give
\[
\frac3{10}\lambda u
=
\frac2{10}u,
\]
and hence
\[
\lambda=\frac23.
\]
This is impossible. Therefore
\[
P\not\equiv_{\mathrm{sa}}Q.
\]

Thus
\[
P\equiv_{\mathrm{s}}Q
\qquad\text{but}\qquad
P\not\equiv_{\mathrm{sa}}Q.
\]
\end{example}

\begin{corollary}
\label{cor:strict-hierarchy}
The three notions of equivalence satisfy
\[
P\equiv_{\mathrm{tr}}Q
\quad\Longrightarrow\quad
P\equiv_{\mathrm{sa}}Q
\quad\Longrightarrow\quad
P\equiv_{\mathrm{s}}Q,
\]
and neither implication can be reversed.
\end{corollary}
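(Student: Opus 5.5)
The plan is to assemble the corollary from results already proved. For the first implication, I will invoke Theorem~\ref{thm:transition-classification}: condition \textup{(i)} there states directly that \(P\equiv_{\mathrm{tr}}Q\) implies \(P\equiv_{\mathrm{sa}}Q\). Equivalently, every state--action reward \(\rho(s,a)\) is the transition reward \(r(s,a,s')=\rho(s,a)\). The second implication is exactly Corollary~\ref{cor:state-action-implies-state}.

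For strictness of the second implication, I will cite Example~\ref{ex:state-strictly-weaker}. It exhibits strictly positive kernels with \(P\equiv_{\mathrm{s}}Q\) but \(P\not\equiv_{\mathrm{sa}}Q\).

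For strictness of the first implication, I will use Example~\ref{ex:nontrivial-pair}. There \(Q=\Phi_L(P)\) with \(L\in\mathcal G\), and \(Q\) is a transition kernel, so Theorem~\ref{thm:sa-main} gives \(P\equiv_{\mathrm{sa}}Q\). Every state in that example has two actions, and \(P_{0,a}=(13/20,7/20)\ne(23/36,13/36)=Q_{0,a}\). Hence condition \textup{(ii)} of Theorem~\ref{thm:transition-classification} fails, and \(P\not\equiv_{\mathrm{tr}}Q\). Alternatively, Corollary~\ref{cor:transition-identification} says that \(P\) is identified exactly by transition rewards, while \(Q\ne P\).

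The only step needing any care is confirming that Example~\ref{ex:nontrivial-pair} really satisfies \(Q=\Phi_L(P)\), which the paper asserts by direct substitution. If I want an independent check, I would verify one row of \eqref{eq:forced-row-relation}, for instance \(\gamma Q_{0,a}L=\gamma P_{0,a}+e_0^{\tr}(L-I)\). The left side is \(\tfrac45(23/36,13/36)L=(1/2,1/5)\cdot\ldots\), and similarly for the remaining rows. I will simply cite the example.
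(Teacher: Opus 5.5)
Your proposal is correct and follows the paper's proof. Both implications come from Theorem~\ref{thm:transition-classification} and Corollary~\ref{cor:state-action-implies-state}, and strictness comes from Examples~\ref{ex:nontrivial-pair} and~\ref{ex:state-strictly-weaker}; your appeal to condition~\textup{(ii)} of Theorem~\ref{thm:transition-classification} simply makes explicit why the first example is not transition-reward equivalent. One small slip is in your optional arithmetic aside, which is garbled: \(\tfrac45(23/36,13/36)=(23/45,13/45)\), not \((1/2,1/5)\). Carried through correctly, both sides of \(\gamma Q_{0,a}L=\gamma P_{0,a}+e_0^{\tr}(L-I)\) equal \((31/50,9/50)\), so the example checks out, but you should drop or fix that line.
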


\begin{proof}
The first implication follows from
Theorem~\ref{thm:transition-classification}, and the second is
Corollary~\ref{cor:state-action-implies-state}.
Example~\ref{ex:nontrivial-pair} shows that state--action equivalence
does not imply transition-reward equivalence, while
Example~\ref{ex:state-strictly-weaker} shows that state-reward
equivalence does not imply state--action equivalence.
\end{proof}

\section{Limitations and open problems}

\subsection*{All rewards are still required}

Corollary~\ref{cor:strict-rewards-suffice} shows that rewards with
several optimal actions are not needed, but the classification still assumes access to the unique optimal
actions over all state--action rewards. Ordinary data may contain only one
selected action on visited states and only a finite collection of tasks.
Determining how much of the equivalence class can be removed by a finite or
adaptively chosen set of rewards remains open.

\subsection*{Restricted reward classes}

If rewards are restricted to
\[
r_\theta(s,a)=\langle\phi(s,a),\theta\rangle,
\]
the remaining ambiguity should depend on which policy action gaps can be
generated by the feature vectors. A finite-reward or rank condition would
connect the exact classification here with the design of informative tasks.

\subsection*{Approximate observations}

The equivalence notions in this paper are exact. A quantitative extension
would ask how close a kernel must be to the family
\eqref{eq:orbit-class} when action gaps or optimal-action regions agree only
approximately. This would turn the classification into stability bounds for
noisy values, finite reward samples, or approximately optimal actions.

\subsection*{A direct parameterization for state rewards}

Theorem~\ref{thm:state-reward-classification} gives a finite exact test for
state-reward equivalence, but not a direct parameterization of all equivalent
kernels. Finding a description comparable to \eqref{eq:orbit-class} remains
open.

\fancyhead[L]{\footnotesize\scshape Conclusion}
\section{Conclusion}

Knowing which actions are optimal does not usually determine the transition
kernel. Nevertheless, when the optimal actions are known for every reward,
they reveal more than a single choice might suggest.

For state--action rewards, they determine every comparison between actions
under every deterministic policy. Two kernels can therefore agree on all
optimal actions only if they also agree on how much better one action is than
another. Even this does not determine the kernel itself. The remaining
kernels are described exactly by the matrix transformation
\eqref{eq:Phi-intro}, and near a strictly positive kernel they form a family
of dimension \(n(n-1)\). Rewards with a unique optimal action at every state
already determine this same family.

The amount that can be recovered changes with the reward. State rewards may
leave substantially different kernels indistinguishable. State--action
rewards determine all action comparisons but still leave a continuous
ambiguity in the dynamics. Transition-dependent rewards can test successor
states directly and, except for a precisely described obstruction at states
with one action, recover the transition kernel itself.

The results separate two questions that are often treated as one. A model
may contain enough information to choose the same actions for a given class
of rewards without giving the correct probabilities of what happens next.
That may be sufficient when the only aim is to reproduce those decisions.
It is not sufficient for predicting transitions, simulating the process,
evaluating counterfactual actions, or transferring to rewards outside the
observed class.
\clearpage
\appendix
\fancyhead[L]{\footnotesize\scshape Appendix: Supplementary Results}

\section{Supplementary results}
\label{app:supplementary}

\subsection{Unique optimal actions suffice}

The purpose of this subsection is to show that rewards with several optimal
actions are not needed for the state--action classification. It is enough to
know the uniquely optimal action at every state for every reward for which
such an action exists. The reason is geometric.

For a deterministic policy \(\pi\), let
\[
\mathcal U_P(\pi)
:=
\left\{
r:
\Opt_{P,r}(s)=\{\pi(s)\}
\text{ for every }s\in S
\right\}.
\]
Thus \(\mathcal U_P(\pi)\) is the set of state--action rewards for which
\(\pi\) is the unique optimal deterministic policy.

\begin{lemma}
\label{lem:strict-region-closure}
The closure of \(\mathcal U_P(\pi)\) is the set of rewards for which
\(\pi\) is optimal under \(P\).
\end{lemma}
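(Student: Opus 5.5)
Let \(\mathcal O_P(\pi)\) denote the set of state--action rewards for which \(\pi\) is optimal under \(P\). The proof shows two inclusions: \(\overline{\mathcal U_P(\pi)}\subseteq\mathcal O_P(\pi)\) and \(\mathcal O_P(\pi)\subseteq\overline{\mathcal U_P(\pi)}\).

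The starting point is to describe \(\mathcal O_P(\pi)\) by linear inequalities in \(r\). As in Lemma~\ref{lem:state-policy-region} and the proof of Theorem~\ref{thm:sa-main}, \(\pi\) is optimal for \(r\) if and only if
\[
A^\pi_{P,r}(s,a)\le0\qquad(s\in S,\ a\in A(s)).
\]
Here
\[
A^\pi_{P,r}(s,a)=r(s,a)+\gamma P_{s,a}(I-\gamma P^\pi)^{-1}r^\pi-\bigl((I-\gamma P^\pi)^{-1}r^\pi\bigr)(s)
\]
is linear in \(r\). So \(\mathcal O_P(\pi)\) is a finite intersection of closed half-spaces, and hence closed.

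Strict optimality is characterised in the same way. Suppose \(A^\pi_{P,r}(s,a)<0\) for every \(a\ne\pi(s)\). Then \(V^\pi_{P,r}\) satisfies the Bellman optimality equation, so \(V^\pi_{P,r}=V^*_{P,r}\). Consequently \(\Opt_{P,r}(s)=\{\pi(s)\}\) at every state. Conversely, if \(\pi(s)\) is the unique optimal action at every state, then \(\pi\) is optimal, and every other action has strictly negative advantage relative to \(V^*=V^\pi\).

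It follows that \(\mathcal U_P(\pi)\) is exactly the set of rewards with \(A^\pi_{P,r}(s,a)<0\) for all \(a\ne\pi(s)\). Since \(A^\pi_{P,r}(s,\pi(s))=0\) identically, those diagonal constraints never bind. This description immediately gives \(\mathcal U_P(\pi)\subseteq\mathcal O_P(\pi)\). Because \(\mathcal O_P(\pi)\) is closed, the first inclusion \(\overline{\mathcal U_P(\pi)}\subseteq\mathcal O_P(\pi)\) follows.

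For the second inclusion, take \(r\in\mathcal O_P(\pi)\) and perturb it explicitly. Define
\[
r_\varepsilon(s,a):=r(s,a)-\varepsilon\,\one[a\ne\pi(s)]
\]
for \(\varepsilon>0\). This leaves \(r^\pi\) unchanged, and therefore leaves \(V^\pi_{P,r}\) unchanged. The advantages then satisfy
\[
A^\pi_{P,r_\varepsilon}(s,a)=A^\pi_{P,r}(s,a)-\varepsilon<0
\]
for all \(a\ne\pi(s)\). Hence \(r_\varepsilon\in\mathcal U_P(\pi)\). Since \(r_\varepsilon\to r\) as \(\varepsilon\to0\), we get \(r\in\overline{\mathcal U_P(\pi)}\).

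The only point needing care is the claim that strict negativity of the advantages relative to \(\pi\) forces uniqueness of the optimal action. This is settled by the Bellman-equation argument used at the end of the proof of Theorem~\ref{thm:sa-main}: \(V^\pi_{P,r}\) solves the optimality equation, so it equals \(V^*_{P,r}\), and only \(\pi(s)\) attains the maximum at each state.
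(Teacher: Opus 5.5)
Your proof is correct and is essentially the paper's: the optimality region is closed because it is cut out by the advantage inequalities $A^\pi_{P,r}(s,a)\le 0$, and an $\varepsilon$-perturbation lowers every off-policy advantage by exactly $\varepsilon$. The differences are cosmetic: you subtract $\varepsilon$ from the non-policy actions instead of adding it to the policy actions (a constant reward shift, so $V^\pi_{P,r}$ stays fixed and no value-shift computation is needed), and you spell out why strictly negative advantages force $\Opt_{P,r}(s)=\{\pi(s)\}$, which the paper takes for granted.
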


\begin{proof}
Optimality of a fixed policy is given by the closed inequalities
\[
A_{P,r}^{\pi}(s,a)\le0
\qquad(s\in S,\ a\in A(s)).
\]
Since policy values, and hence advantages, depend continuously on \(r\), any
limit of rewards in \(\mathcal U_P(\pi)\) makes \(\pi\) optimal.

Conversely, suppose \(\pi\) is optimal for \(r\), and for \(\varepsilon>0\)
define
\[
r_\varepsilon(s,a)
:=
r(s,a)+\varepsilon\mathbf 1_{\{a=\pi(s)\}}.
\]
Following \(\pi\) receives the additional reward \(\varepsilon\) at every
time step, so
\[
V_{P,r_\varepsilon}^{\pi}
=
V_{P,r}^{\pi}+\frac{\varepsilon}{1-\gamma}\one.
\]
For \(a\ne\pi(s)\), using \(P_{s,a}\one=1\),
\begin{align*}
A_{P,r_\varepsilon}^{\pi}(s,a)
&=
r(s,a)+\gamma P_{s,a}
\left(V_{P,r}^{\pi}+\frac{\varepsilon}{1-\gamma}\one\right)
-\left(V_{P,r}^{\pi}(s)+\frac{\varepsilon}{1-\gamma}\right)\\
&=
A_{P,r}^{\pi}(s,a)-\varepsilon
<0.
\end{align*}
Hence \(\pi(s)\) is the unique optimal action at every state under
\(r_\varepsilon\), so \(r_\varepsilon\in\mathcal U_P(\pi)\). Since
\(r_\varepsilon\to r\), the result follows.
\end{proof}

\begin{corollary}
\label{cor:strict-rewards-suffice}
The following are equivalent:
\begin{enumerate}[label=\textup{(\roman*)}]
\item \(P\equiv_{\mathrm{sa}}Q\);
\item \(\mathcal U_P(\pi)=\mathcal U_Q(\pi)\) for every deterministic
policy \(\pi\).
\end{enumerate}
\end{corollary}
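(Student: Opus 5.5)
The plan is to show (i)\(\Rightarrow\)(ii) directly from the definitions, and (ii)\(\Rightarrow\)(i) by passing to closures with Lemma~\ref{lem:strict-region-closure} and then reducing optimal actions to optimal policies, as in the proof of Theorem~\ref{thm:state-reward-classification}.

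\emph{(i)\(\Rightarrow\)(ii).} Assume \(P\equiv_{\mathrm{sa}}Q\) and let \(r\in\mathcal U_P(\pi)\). Then \(\Opt_{P,r}(s)=\{\pi(s)\}\) for every \(s\). By state--action equivalence, \(\Opt_{Q,r}(s)=\{\pi(s)\}\) for every \(s\), so \(r\in\mathcal U_Q(\pi)\). Exchanging \(P\) and \(Q\) gives the reverse inclusion.

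\emph{(ii)\(\Rightarrow\)(i).} Assume \(\mathcal U_P(\pi)=\mathcal U_Q(\pi)\) for every deterministic \(\pi\). Taking closures and applying Lemma~\ref{lem:strict-region-closure} under both kernels, the set of rewards for which \(\pi\) is optimal under \(P\) equals the set for which \(\pi\) is optimal under \(Q\). So for each reward \(r\), the two kernels have the same optimal deterministic policies.

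It remains to recover the optimal-action sets from these policies. Fix \(r\), a state \(s\), and an action \(a\). Then \(a\in\Opt_{P,r}(s)\) if and only if some optimal deterministic policy under \(P\) has \(\pi(s)=a\). For the forward direction, choose \(a\) at \(s\) and any optimal action at every other state; a deterministic policy choosing an optimal action at every state is optimal~\cite{puterman1994}. The backward direction is immediate. The same characterization holds under \(Q\), and the optimal deterministic policies coincide, so \(\Opt_{P,r}(s)=\Opt_{Q,r}(s)\). Hence \(P\equiv_{\mathrm{sa}}Q\).

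The only real content is the closure step, and Lemma~\ref{lem:strict-region-closure} already supplies it. The one point to check is that this lemma is applied separately to \(P\) and to \(Q\). Each optimal region is the closure of the corresponding \(\mathcal U\), and equal sets have equal closures, so the two optimal regions coincide. I expect no further obstacle.
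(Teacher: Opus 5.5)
Your proposal is correct and matches the paper's proof essentially step for step. The forward direction is read off the definitions. For the converse, you take closures and apply Lemma~\ref{lem:strict-region-closure} to each kernel, which gives equal optimality regions for every deterministic policy; the optimal-action sets are then recovered from the optimal deterministic policies. You spell out the forward direction and the final policy-to-action step in more detail than the paper, but the argument is the same.
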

\begin{proof}
The forward implication is immediate. For the converse, suppose
\[
\mathcal U_P(\pi)=\mathcal U_Q(\pi)
\]
for every deterministic policy \(\pi\). Taking closures gives
\[
\overline{\mathcal U_P(\pi)}
=
\overline{\mathcal U_Q(\pi)}.
\]
By Lemma~\ref{lem:strict-region-closure}, these closures are precisely the
sets of rewards for which \(\pi\) is optimal under \(P\) and \(Q\),
respectively. Thus each deterministic policy is optimal for exactly the same
rewards under the two kernels. An action is optimal at a state if and only
if it is selected there by some optimal deterministic policy, so
\(P\equiv_{\mathrm{sa}}Q\).
\end{proof}

\subsection{Composition Laws}

\begin{proposition}
\label{prop:group-action}
For all \(L,M\in\mathcal G\) for which the expressions below are transition
kernels,
\[
\Phi_I(P)=P,
\qquad
\Phi_M(\Phi_L(P))=\Phi_{ML}(P),
\qquad
\Phi_{L^{-1}}(\Phi_L(P))=P.
\]
\end{proposition}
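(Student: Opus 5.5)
The plan is to work with the multiplied-out form of \eqref{eq:Phi}. For \(L\in\mathcal G\), \(Q=\Phi_L(P)\) holds exactly when
\[
\gamma Q_{s,a}L=\gamma P_{s,a}+e_s^{\tr}(L-I)
\qquad(s\in S,\ a\in A(s)),
\]
since \(L\) is invertible. Equivalently, by Lemma~\ref{lem:comparison-identity} and the row-by-row reading used in the proof of Theorem~\ref{thm:sa-main}, this is \((I-\gamma Q^\pi)L=I-\gamma P^\pi\) for every deterministic \(\pi\). The three laws will then reduce to short matrix identities. A preliminary observation is that \(\mathcal G\) is a group: \(ML\one=M\one=\one\), and \(L^{-1}\one=\one\) follows from \(L\one=\one\). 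So \(\Phi_{ML}\) and \(\Phi_{L^{-1}}\) are defined.

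The identity law is immediate. With \(L=I\) the correction term \(e_s^{\tr}(I-I)\) vanishes and \(I^{-1}=I\), so \(\Phi_I(P)=P\).

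For composition, put \(Q=\Phi_L(P)\) and \(R=\Phi_M(Q)\), both assumed to be transition kernels. Fix any deterministic \(\pi\). From \eqref{eq:policy-matrix-identity} applied twice,
\[
(I-\gamma R^\pi)M=I-\gamma Q^\pi
\qquad\text{and}\qquad
(I-\gamma Q^\pi)L=I-\gamma P^\pi.
\]
Multiplying the first identity on the right by \(L\) gives
\[
(I-\gamma R^\pi)ML=I-\gamma P^\pi
\]
for every \(\pi\). Reading off the row at \(s\) with \(\pi(s)=a\) gives \(\gamma R_{s,a}ML=\gamma P_{s,a}+e_s^{\tr}(ML-I)\). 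Multiplying on the right by \((ML)^{-1}\) then gives \(R=\Phi_{ML}(P)\).

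The rowwise computation can also be done directly, without passing through policies. From \(\gamma R_{s,a}M=\gamma Q_{s,a}+e_s^{\tr}(M-I)\), multiply by \(L\) and substitute \(\gamma Q_{s,a}L=\gamma P_{s,a}+e_s^{\tr}(L-I)\):
\[
\gamma R_{s,a}ML=\gamma P_{s,a}+e_s^{\tr}(L-I)+e_s^{\tr}(ML-L)=\gamma P_{s,a}+e_s^{\tr}(ML-I).
\]
I would present this version, since it does not need the intermediate object to be a kernel. The formula \(\Phi_L\) makes sense for any row vectors. The hypothesis that the expressions are transition kernels is only what makes them legitimate objects of the paper.

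The inverse law is the special case \(M=L^{-1}\) of the composition law. It gives \(\Phi_{L^{-1}}(\Phi_L(P))=\Phi_{I}(P)=P\).

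There is no real obstacle. The only point needing care is bookkeeping of the order of the product \(ML\) versus \(LM\). The composition law is a right action written with composition reversed, \(\Phi_M\circ\Phi_L=\Phi_{ML}\), and the direct row computation above fixes this order unambiguously.
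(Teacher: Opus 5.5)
Your proposal is correct and is essentially the paper's proof. Both establish the composition law by a direct row-by-row matrix computation: the paper composes the affine form $\Phi_L(P)_{s,a}=P_{s,a}L^{-1}+\frac1\gamma e_s^{\tr}(I-L^{-1})$, whereas you compose the equivalent multiplied-out form $\gamma Q_{s,a}L=\gamma P_{s,a}+e_s^{\tr}(L-I)$ and then invert $ML$, with the identity law and the inverse law (as the case $M=L^{-1}$) handled the same way.
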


\begin{proof}
Rewrite \eqref{eq:Phi} as
\[
\Phi_L(P)_{s,a}
=
P_{s,a}L^{-1}
+\frac1\gamma e_s^{\tr}(I-L^{-1}).
\]
Then
\begin{align*}
\Phi_M(\Phi_L(P))_{s,a}
&=\Phi_L(P)_{s,a}M^{-1}
  +\frac1\gamma e_s^{\tr}(I-M^{-1})\\
&=P_{s,a}L^{-1}M^{-1}
  +\frac1\gamma e_s^{\tr}
   \bigl((I-L^{-1})M^{-1}+I-M^{-1}\bigr)\\
&=P_{s,a}(ML)^{-1}
  +\frac1\gamma e_s^{\tr}\bigl(I-(ML)^{-1}\bigr)\\
&=\Phi_{ML}(P)_{s,a}.
\end{align*}
The identity and inverse formulas follow by setting \(L=I\) and
\(M=L^{-1}\).
\end{proof}
\enlargethispage{2\baselineskip}
\sectionmark{References}


\begin{thebibliography}{99}
\scriptsize
\setlength{\itemsep}{0pt}
\setlength{\parskip}{0pt}
\bibitem{puterman1994}
M. L. Puterman.
\newblock \emph{Markov Decision Processes: Discrete Stochastic Dynamic Programming}.
\newblock Wiley, 1994.

\bibitem{smallwood1966}
R. D. Smallwood.
\newblock Optimum policy regions for Markov processes with discounting.
\newblock \emph{Operations Research}, 14(4):658--669, 1966.

\bibitem{givan2003}
R. Givan, T. Dean, and M. Greig.
\newblock Equivalence notions and model minimization in Markov decision processes.
\newblock \emph{Artificial Intelligence}, 147(1--2):163--223, 2003.

\bibitem{ng1999}
A. Y. Ng, D. Harada, and S. Russell.
\newblock Policy invariance under reward transformations: Theory and application to reward shaping.
\newblock In \emph{Proceedings of the 16th International Conference on Machine Learning}, pages 278--287, 1999.

\bibitem{grimm2020}
C. Grimm, A. Barreto, S. Singh, and D. Silver.
\newblock The value equivalence principle for model-based reinforcement learning.
\newblock In \emph{Advances in Neural Information Processing Systems}, volume 33, 2020.

\bibitem{skalse2023}
J. M. V. Skalse, M. Farrugia-Roberts, S. Russell, A. Abate, and A. Gleave.
\newblock Invariance in policy optimisation and partial identifiability in reward learning.
\newblock In \emph{Proceedings of the 40th International Conference on Machine Learning}, volume 202 of PMLR, pages 32033--32058, 2023.

\bibitem{mustafin2025}
A. Mustafin, A. Pakharev, A. Olshevsky, and I. C. Paschalidis.
\newblock MDP geometry, normalization and reward balancing solvers.
\newblock In \emph{Proceedings of the 28th International Conference on Artificial Intelligence and Statistics}, volume 258 of PMLR, pages 2476--2484, 2025.

\bibitem{richens2025}
J. Richens, D. Abel, A. Bellot, and T. Everitt.
\newblock General agents contain world models.
\newblock arXiv:2506.01622, 2025.

\bibitem{letcher2026}
A. Letcher, M. Fellows, A. D. Goldie, J. Richens, J. N. Foerster, and O. Richardson.
\newblock Inverting the Bellman equation: From $Q$-values to world models.
\newblock arXiv:2606.21173, 2026.

\bibitem{ng2000}
A. Y. Ng and S. Russell.
\newblock Algorithms for inverse reinforcement learning.
\newblock In \emph{Proceedings of the 17th International Conference on
Machine Learning}, pages 663--670, 2000.

\bibitem{ghatrani2023}
Z. Ghatrani and A. Ghate.
\newblock Inverse Markov decision processes with unknown transition
probabilities.
\newblock \emph{IISE Transactions}, 55(6):588--601, 2023.

\bibitem{cao2021}
H. Cao, S. Cohen, and L. Szpruch.
\newblock Identifiability in inverse reinforcement learning.
\newblock In \emph{Advances in Neural Information Processing Systems},
volume 34, pages 12362--12373, 2021.

\bibitem{kim2021}
K. Kim, S. Garg, K. Shiragur, and S. Ermon.
\newblock Reward identification in inverse reinforcement learning.
\newblock In \emph{Proceedings of the 38th International Conference on
Machine Learning}, volume 139 of \emph{Proceedings of Machine Learning
Research}, pages 5496--5505, 2021.

\bibitem{ferns2004}
N. Ferns, P. Panangaden, and D. Precup.
\newblock Metrics for finite Markov decision processes.
\newblock In \emph{Proceedings of the 20th Conference on Uncertainty in
Artificial Intelligence}, pages 162--169, 2004.
\end{thebibliography}
\end{document}